\pdfoutput=1
\documentclass{article}

\usepackage[margin=1in]{geometry}
\usepackage[utf8]{inputenc} 
\usepackage[T1]{fontenc}    
\usepackage{hyperref}       
\usepackage{url}            
\usepackage{booktabs}       
\usepackage{amsfonts}       
\usepackage{nicefrac}       
\usepackage{microtype}      

\usepackage[noend]{algpseudocode}
\usepackage[utf8]{inputenc}
\usepackage{algorithm}
\usepackage{pgfplots}
\usepackage{amsmath}
\usepackage{multirow}
\usepackage{graphicx}
\usepackage{amssymb}
\usepackage{caption}
\usepackage{subcaption}
\usepackage{amsthm}
\usepackage[export]{adjustbox}
\usepackage{natbib}
\usepackage{booktabs}
\usepackage{bm}
\usepackage{color}
\usepackage[normalem]{ulem}
\usepackage{scalerel,stackengine}
\usepackage{tikz}
\useunder{\uline}{\ul}{}

\usepgfplotslibrary{groupplots}

\theoremstyle{plain}
\newtheorem{theorem}{Theorem}[]
\newtheorem{definition}{Definition}[]

\newtheorem{claim}{Claim}[]
\DeclareMathOperator*{\argmax}{arg\,max}

\newcommand\inner[2]{\langle #1, #2 \rangle}

\newcommand{\norm}[1]{\left|\left|{#1}\right|\right|}
\DeclareMathOperator{\sign}{sgn}

\allowdisplaybreaks

\title{Prior Convictions: Black-Box Adversarial \\ Attacks with Bandits and
Priors }

\author{Andrew Ilyas\footnote{Equal contribution} \\
  MIT\\
  \texttt{ailyas@mit.edu} \\
   \and
   Logan Engstrom\footnotemark[1] \\
  MIT\\
  \texttt{engstrom@mit.edu} \\
  \and
  Aleksander M\k{a}dry \\
  MIT\\
  \texttt{madry@mit.edu}
}

\date{}
\begin{document}
\maketitle

\begin{abstract}
        We study the problem of generating adversarial examples in a black-box setting in which only loss-oracle
    access to a model is available. We introduce a framework
    that conceptually unifies much of the existing work on black-box
    attacks, and we demonstrate that the current state-of-the-art
    methods are
    optimal in a natural sense. Despite this optimality, we show
    how to improve black-box attacks by bringing a new element into the
    problem: gradient priors. We give a
    bandit optimization-based algorithm that allows us to seamlessly integrate any such
    priors, and we explicitly identify and incorporate two examples.
    The resulting methods use two to four times fewer queries and fail two
    to five times less often than the current state-of-the-art.
\footnote{The code for reproducing
    our work is available at \url{https://git.io/blackbox-bandits}.}
\end{abstract}

\section{Introduction}
Recent research has shown that neural networks exhibit significant vulnerability to
adversarial examples, or slightly perturbed inputs designed to
fool the network prediction.  This vulnerability is present in a wide range
of settings, from situations in which inputs are fed directly to
classifiers~\citep{szegedy2013intriguing,carlini2016hidden} to highly
variable real-world environments~\citep{KurakinGB16a,turtle}. Researchers
have developed a host of methods to construct such
attacks~\citep{fgsm,deepfool,carlini2017towards,madry2017towards},  most of
which correspond to first order (i.e., gradient based) methods.  These
attacks turn out to be highly effective: in many cases, only a few gradient
steps suffice to construct an adversarial perturbation.

A significant shortcoming of many of these attacks, however, is that they 
fundamentally rely on the {\em white-box} threat model. That is, they crucially
require direct access to the gradient of the classification loss of the attacked
network.  In many real-world situations, expecting this kind of complete access
is not realistic. In such settings, an attacker can only issue classification
queries to the targeted network, which corresponds to a more restrictive {\em
black box} threat model. 

Recent work \citep{zoo,exploring-the-space,query-efficient} provides a
number of attacks for this threat model.
Chen et. al~\cite{zoo} show how to use a basic primitive of zeroth
order optimization, the finite difference method, to estimate the gradient
from classification queries and then use it (in addition to a number of
optimizations) to mount a gradient based
attack. The method indeed successfully constructs  adversarial perturbations.
It comes, however, at the cost of introducing a significant overhead in terms of the number of queries needed. For instance, attacking an ImageNet~\citep{ilsvrc15}
classifier requires hundreds of thousands of queries. Subsequent work
\citep{query-efficient} improves this dependence
significantly, but still falls short of fully mitigating this issue (see
Section~\ref{subsec:results} for a more detailed analysis).

\subsection{Our contributions}

We revisit zeroth-order optimization in the context
of adversarial example generation, both from an empirical
and theoretical perspective. We propose a new approach for generating
black-box adversarial examples, using bandit optimization in order to exploit
prior information about the gradient, which we show is necessary to break
through the optimality of current methods.  We evaluate our approach on the task of
generating black-box adversarial examples, where the methods obtained from
integrating two example priors significantly outperform state-of-the-art approaches.

Concretely, in this work:
\begin{enumerate}
    \item We formalize the gradient estimation problem as the central problem in the context of query-efficient black-box attacks. We then show how the resulting framework unifies the previous attack methodology. We prove that the least squares method, a classic primitive in signal processing, not only constitutes an optimal solution to the general gradient estimation problem but also is essentially equivalent to the current-best black-box attack methods.
    \item We demonstrate that, despite this seeming optimality of these methods, we can still improve upon them by exploiting an aspect of the problem that has been not considered previously: the priors we have on the distribution of the gradient. We identify two example classes of such priors, and show that they indeed lead to better predictors of the gradient.
    \item Finally, we develop a bandit optimization framework for generating
	black-box adversarial examples which allows for the seamless
	integration of priors. To demonstrate its effectiveness, we show that leveraging the two aforementioned priors
	yields black-box attacks that are 2-5 times more query
	efficient and less failure-prone than the state of the art.
\end{enumerate}

\begin{table}[h]
\centering
\caption{Summary of effectiveness of $\ell_2$ and $\ell_\infty$ ImageNet
attacks on Inception v3 using NES, bandits with time prior (Bandits$_{T}$),
and bandits with time and data-dependent priors (Bandits$_{TD}$). Note that
in the first column, the average number of queries  is calculated only over
successful attacks, and we enforce a query limit of 10,000 queries. For
purposes of direct comparison, the last column calculates the average
number of queries used for only the images that NES (previous SOTA) was
successful on. Our most powerful attack uses 2-4 times fewer queries, and
fails 2-5 times less often.
}
\label{tab:multistep}
  \begin{tabular}{@{}cccccccccc@{}}
    \toprule
    \multirow{2}{*}{\textbf{Attack}} & \phantom{x} &
    \multicolumn{2}{c}{\textbf{Avg. Queries}} & \phantom{x} &
    \multicolumn{2}{c}{\textbf{Failure Rate}} & \phantom{x} &
    \multicolumn{2}{c}{\textbf{Queries on NES Success}} \\ 
    \cmidrule{3-4} \cmidrule{6-7} \cmidrule{9-10} && $\ell_\infty$ &
    $\ell_2$ && $\ell_\infty$ & $\ell_2$ && $\ell_\infty$ & $\ell_2$ \\
    \midrule
    NES && 1735 & 2938 && 22.2\% & 34.4\% && 1735 & 2938\\
    Bandits$_{T}$ && 1781 & 2690 && 11.6\% & 30.4\% && 1214 & 2421 \\
    \textbf{Bandits$_{\mathbf{TD}}$} && \textbf{1117} & \textbf{1858} &&
    \textbf{4.6\%} &  \textbf{15.5\%} && \textbf{703} & \textbf{999} \\
    \bottomrule
  \end{tabular}
\end{table}

\section{Black-box attacks and the gradient estimation problem}
Adversarial examples are natural inputs to a machine learning system that have 
been carefully perturbed in order to induce misbehaviour of the system, under a
constraint on the magnitude of the pertubation (under some metric). For image
classifiers, this misbehaviour can be either classification
as a specific class other than the original one (the targeted attack) or misclassification (the untargeted attack).
For simplicity and to make the presentation of the overarching framework
focused, in this paper we restrict our attention to the untargeted case. Both our algorithms and the whole framework can be, however, easily adapted to the targeted setting.
Also, we consider the most standard threat model in which adversarial perturbations must have $\ell_p$-norm, for some fixed $p$, less than some $\epsilon_p$.

\subsection{First-order adversarial attacks}
Suppose that we have some classifier $C(x)$ with a corresponding classification loss function $L(x,y)$,
where $x$ is some input and $y$ its corresponding label.
In order to generate a misclassified input from some input-label pair
$(x,y)$, we want to find an adversarial example $x'$ which maximizes $L(x',y)$ but still
remains $\epsilon_p$-close to the original input. We can thus
formulate our adversarial attack problem as the following constrained optimization
task:
\[
  x' = \argmax_{x' :  \|x' - x\|_p \leq \epsilon_p} L(x',y)
\]
First order methods tend to be very successful at solving the problem despite its non-convexity~\citep{fgsm,carlini2017towards,madry2017towards}. A first order method
used as the backbone of some of the most powerful white-box adversarial attacks for $\ell_p$
bounded adversaries is {\em projected gradient descent (PGD)}. This iterative
method, given some input $x$ and its correct label $y$, computes a perturbed input $x_k$ by applying $k$ steps of the following update (with $x_0=x$)
\begin{equation}
  \label{eq:pgd_wb} x_{l} = \Pi_{B_p(x, \epsilon)} (x_{l-1} + \eta s_l) 
  \qquad\qquad\qquad\text{ with }
  s_{l} = \Pi_{\partial B_p(0, 1)} \nabla_{x} L(x_{l-1}, y)
\end{equation}

Here, $\Pi_{S}$ is the projection onto the set $S$, $B_p(x', \varepsilon')$ is
the $\ell_p$ ball of radius $\varepsilon'$ around $x'$, $\eta$ is the step size,
and $\partial U$ is the boundary of a set $U$. Also, as is standard in
continuous optimization, we make $s_l$ be the projection of the gradient
$\nabla_x L(x_{l-1}, y)$ at $x_{l-1}$ onto the unit $\ell_p$ ball. This way we
ensure that $s_{l}$ corresponds to the unit $\ell_p$-norm vector that has the
largest inner product with $\nabla_x L(x_{l-1}, y)$. (Note that, in the case of
the $\ell_2$-norm, $s_l$ is simply the normalized gradient but in the case of,
e.g., the $\ell_\infty$-norm, $s_l$ corresponds to the sign vector, $\sign\left(\nabla_{x} L(x_{l-1}, y)\right)$ of the gradient.)

So, intuitively, the PGD update perturbs the input in the direction that (locally) increases the loss the most.
Observe that due to the projection in ~\eqref{eq:pgd_wb}, $x_k$ is
always a valid perturbation of $x$, as desired.

\subsection{Black-box adversarial attacks}
The projected gradient descent (PGD) method described above is designed to be used in the context of so-called {\em white-box} attacks. That is, in the setting where the adversary has full access to the gradient $\nabla_x L(x, y)$ of the loss function of the attacked model. 
In many practical scenarios, however, this kind of access is not
available---in the corresponding, more
realistic {\em black-box} setting, the adversary has only access to an oracle
that returns for a given input $(x,y)$, only the value of the loss $L(x,y)$.

One might expect that PGD is thus not useful in such black-box setting. It turns out, however, that this intuition is incorrect. Specifically, one can still {\em estimate} the gradient using only such value queries. (In fact, this kind of estimator is the backbone of so-called zeroth-order optimization frameworks \citep{spall2005introduction}.) The most canonical primitive in this context is the {\em finite difference method}. This method estimates the {\em directional} derivative $D_{v} f(x) = \inner{\nabla_x f(x)}{v}$ of some function $f$ at a point $x$ in the direction of a vector $v$ as
\begin{equation}
\label{eq:direct_fd} D_v f(x) = \inner{\nabla_x f(x)}{v} \approx \left(f(x+\delta v)-f(x)\right)/\delta.
\end{equation}
Here, the step size $\delta > 0$ governs the quality of the
gradient estimate. Smaller $\delta$ gives more accurate estimates but also decreases reliability, due to precision and noise issues. Consequently, in practice, $\delta$ is a tunable parameter.
Now, we can just use finite differences to construct an estimate of the gradient. To this end, one can find the $d$ components of the gradient by estimating the inner products of the gradient with all the standard basis vectors $e_1, \ldots, e_d$:  
\begin{equation}
  \label{eq:zoo_approx} \widehat{\nabla}_x L(x, y) = \sum_{k=1}^d e_k \left(L(x+\delta e_k, y)-L(x, y)\right)/\delta \approx \sum_{k=1}^d e_k \inner{\nabla_x L(x,y)}{e_k}
\end{equation}
We can then easily implement the PGD attack (c.f. \eqref{eq:pgd_wb}) using this estimator: 
\begin{equation}
  \label{eq:pgd_wb_est} x_{l} = \Pi_{B_p(x, \epsilon)} (x_{l-1} + \eta
  \widehat{s}_l)\qquad\qquad\qquad\text{ with  }\ \
  \widehat{s}_l = \Pi_{\partial B_p(0, 1)}  \widehat{\nabla}_{x} L(x_{l-1}, y)
\end{equation}
Indeed, \cite{zoo} were the first to use finite differences methods in this basic form to power PGD--based adversarial attack in the black-box setting. This basic attack was shown to be successful but, since its query complexity is proportional to the dimension, its resulting query complexity was prohibitively large. For example, the Inception v3~\citep{szegedy2015rethinking} classifier on the ImageNet dataset has dimensionality d=268,203 and thus this method would require 268,204 queries. (It is worth noting, however, that \cite{zoo} developed additional methods to, at least partially, reduce this query complexity.)
\begin{figure}[h]
    \begin{center}
	\includegraphics{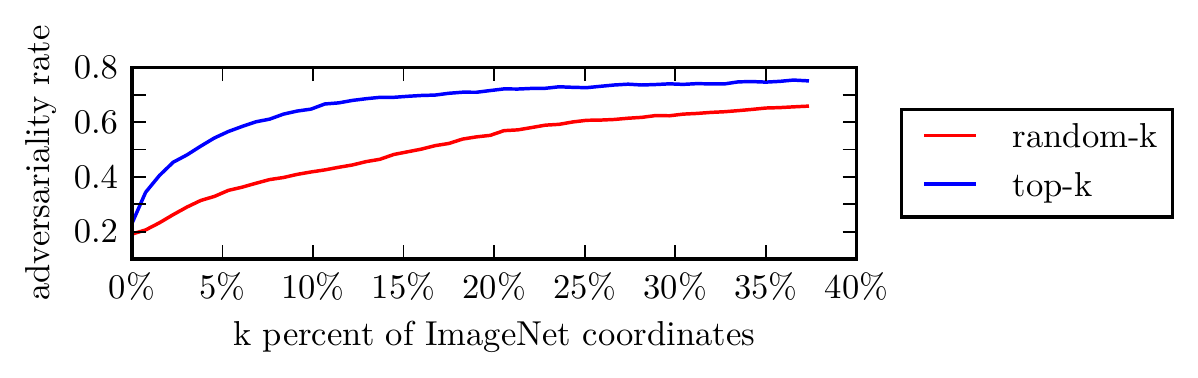}
    \end{center} 
    \caption{The fraction of correctly estimated coordinates of
	$\sign(\nabla_x L(x,y))$ required to successfully execute the
    single-step PGD (also known as FGSM) attack, with $\epsilon=0.05$. In
the experiment, for each $k$, the top $k$ percent -- chosen either by
magnitude (\texttt{top-k}) or randomly (\texttt{random-k}) -- of the signs
of the coordinates are set correctly, and the rest are set to $+1$ or $-1$
at random. The adversariality rate is the portion of 1,000 random ImageNet
images misclassified after one  FGSM step. Observe that, for example,
estimating only 20\% of the coordinates correctly leads to
misclassification in the case of more than 60\% of images.}
\label{fig:meankgraph}
\end{figure}

\subsection{Black-box attacks with imperfect gradient estimators} 
In the light of the above discussion, one can wonder if the algorithm \eqref{eq:pgd_wb_est} can be made more query-efficient. A natural idea here would be to avoid fully estimating the gradient and rely instead only on its {\em imperfect} estimators. This gives rise to the following question: \emph{How accurate of an gradient estimate is necessary to execute a successful PGD attack?}

We examine this question first in the simplest possible setting: one in which we only take a {\em single} PGD step (i.e., the case of $k=1$). Previous work ~\citep{fgsm} indicates that such an attack can already be quite powerful. So, we study how the effectiveness of this attack varies with gradient estimator accuracy.
Our experiments, shown in Figure~\ref{fig:meankgraph}, suggest that it is feasible to generate
adversarial examples without estimating correctly even most of the coordinates of the
gradient. For example, in the context of $\ell_\infty$ attacks, setting a randomly selected 20\% of the coordinates in the gradient
to match the true gradient (and making the remaining coordinates have random sign) is sufficient to fool the classifier
on more than 60\% images with single-step PGD.
Our experiments thus demonstrate that an adversary is likely to be able to cause a misclassification by performing the iterated PGD attack, even when driven by a gradient estimate that is largely imperfect.

\subsection{The gradient estimation problem}
\label{sec:gradient_estimation_problem}
The above discussion makes it clear that successful attacks do not require
a perfect gradient estimation, provided this estimate is suitably
constructed.  It is still unclear, however, how to efficiently find this
kind of imperfect but helpful estimator. Continuous optimization
methodology suggests that the key characteristic needed from our
estimator is for it to have a sufficiently large inner product with the
actual gradient. We thus capture this challenge as the following {\em
gradient estimation problem}:
\begin{definition}[Gradient estimation problem]\label{def:gradient_estimation}
For an input/label pair $(x, y)$ and a loss function $L$, let $g^* =
\nabla_x L(x, y)$ be the gradient of $L$ at $(x,y)$. Then the goal of the
gradient estimation problem is to find a unit vector $\widehat{g}$ maximizing the inner product 
    \begin{equation}
    \label{eq:gep}
          \mathbb{E}\left[ \widehat{g}^Tg^*\right],
  \end{equation}
  from a limited number of (possibly adaptive) function value queries $L(x', y')$. (The expectation here is taken over the randomness of the estimation algorithm.)  
\end{definition}

One useful perspective on the above gradient estimation problem stems from
casting the recovery of $g^*$ in \eqref{eq:gep} as an underdetermined
vector estimation task. That is, one can view each execution of the finite
difference method (see \eqref{eq:direct_fd}) as computing an inner product
query in which we obtain the value of the inner product of $g^*$ and some
chosen direction vector $A_i$. Now, if we execute $k$ such queries, and
$k<d$ (which is the regime we are interested in), the information acquired
in this process can be expressed as the following (underdetermined) linear
regression problem $Ag^* = y$, where the rows of the matrix $A$ correspond
to the queries $A_1, \ldots, A_k$ and the entries of the vector $y$ gives
us the corresponding inner product values.

\paragraph{Relation to compressive sensing} The view of the gradient
estimation problem we developed bears striking similarity to the
compressive sensing setting~\citep{foucart2013mathematical}. Thus one might
wonder if the toolkit of that area could be applied here. Compressive
sensing crucially requires, however, certain sparsity structure in the
estimated signal (here, in the gradient $g^*$) and, to our knowledge, the
loss gradients do not exhibit such a structure. (We discuss this
further in Appendix~\ref{app:omitted}.)

\paragraph{The least squares method} In light of this, we turn our attention to
another classical signal-processing method: norm-minimizing $\ell_2$ least
squares estimation. This method approaches the estimation problem posed in
\eqref{eq:gep} by casting it as an undetermined linear regression problem of the
form $Ag^*=b$, where we can choose the matrix $A$ (the rows of $A$ correspond to
inner product queries with $g^{*}$). Then, it obtains the
solution $\widehat{g}$ to the regression problem by solving:

\begin{equation}
\label{eq:lstsq} \min_{\widehat{g}} \|\widehat{g}\|_2 \qquad \text{ s.t. } A\widehat{g} = y.
\end{equation}

A reasonable choice for $A$ (via~\cite{johnson1984extensions} and related results) is the
distance-preserving random Gaussian projection matrix, i.e. $A_{ij}$ normally
distributed.

The resulting algorithm turns out to yield solutions that are approximately
those given by Natural Evolution Strategies (NES), which \citep{query-efficient}
previously applied to black-box attacks. In particular, in Appendix
\ref{app:proofs}, we prove the following theorem.

\begin{theorem}[NES and Least Squares equivalence]
    \label{thm:neslsq}
     Let $\hat{x}_{NES}$ be the Gaussian $k$-query NES estimator of a $d$-dimensional gradient
    $\bm{g}$ and let $\hat{x}_{LSQ}$ be the minimal-norm $k$-query least-squares estimator of $\bm{g}$. For any $p>0$, with probability at least $1-p$ we have that 
    \[
	\inner{\hat{x}_{LSQ}}{\bm{g}} -
	\inner{\hat{x}_{NES}}{\bm{g}} \leq
	O\left(\sqrt{\frac{k}{d}\cdot\log^3\left(\frac{k}{p}\right)}\right)\norm{g}^2.
    \]
\end{theorem}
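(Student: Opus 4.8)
The claim relates two gradient estimators built from $k$ Gaussian inner‑product queries: NES, which averages $\tfrac{1}{\sigma}L$‑differences against Gaussian directions (equivalently, $\hat x_{NES}\propto \tfrac1k A^\top y$ where $y=Ag$), and the minimum‑norm least squares solution $\hat x_{LSQ}\propto A^\top(AA^\top)^{-1}y$. The plan is to write both estimators explicitly in terms of $A$ and $g$, reduce the difference $\langle \hat x_{LSQ},g\rangle-\langle\hat x_{NES},g\rangle$ to a statement about how close $\tfrac1k AA^\top$ is to the identity $I_k$, and then invoke a standard non‑asymptotic bound on the spectrum of a $k\times d$ Gaussian matrix with $k\ll d$.

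Let me sketch the steps. First, normalize: assume $\|g\|=1$ (the general case follows by homogeneity, contributing the $\|g\|^2$ factor — one factor from the estimator scaling, one from the inner product). Write $A\in\mathbb R^{k\times d}$ with i.i.d.\ $\mathcal N(0,1)$ entries, $y=Ag\in\mathbb R^k$, $M=\tfrac1k AA^\top$. Then $\hat x_{LSQ}=A^\top(AA^\top)^{-1}Ag$, normalized, and $\hat x_{NES}=\tfrac1k A^\top A g$ up to its normalization. Both live in $\mathrm{span}(A^\top)$ and on that subspace $g$ has component $A^\top(AA^\top)^{-1}Ag=:g_\parallel$. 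The key identity: $\langle \hat x_{LSQ},g\rangle = \|g_\parallel\|$ exactly (LSQ recovers the projection), while $\langle \hat x_{NES},g\rangle = \langle A^\top A g,g\rangle/\|A^\top A g\| = g^\top A^\top A g/\|A^\top A g\|$. Now substitute $A A^\top = kM$: one shows $g^\top A^\top A g = k\, u^\top M u$ and $\|A^\top A g\|^2 = k^2 u^\top M^2 u$ where $u = (AA^\top)^{-1/2}Ag/\sqrt k$ is a unit vector (or, more cleanly, diagonalize $AA^\top$ and write everything in eigencoordinates $\lambda_1,\dots,\lambda_k$ with weights $w_i$, $\sum w_i^2=\|g_\parallel\|^2$). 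In eigencoordinates: $\langle\hat x_{LSQ},g\rangle=\big(\sum w_i^2\big)^{1/2}$ and $\langle\hat x_{NES},g\rangle=\dfrac{\sum \lambda_i w_i^2}{\big(\sum \lambda_i^2 w_i^2\big)^{1/2}}$. The difference is controlled by the spread of the $\lambda_i$: when all $\lambda_i=\lambda$, the two coincide, and a second‑order expansion gives difference $O\!\big(\max_i|\lambda_i/\bar\lambda-1|^2\big)\cdot\|g_\parallel\|$.

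Second, control the eigenvalues. For $A$ a $k\times d$ standard Gaussian matrix with $k<d$, the eigenvalues of $\tfrac1d AA^\top$ concentrate around $1$: by the Davidson–Szarek / Vershynin bound, with probability $\ge 1-p$ all singular values of $\tfrac1{\sqrt d}A$ lie in $[1-\sqrt{k/d}-t,\,1+\sqrt{k/d}+t]$ with $t=O(\sqrt{\log(1/p)/d})$, so $\max_i|\lambda_i/\bar\lambda-1| = O(\sqrt{k/d}+\sqrt{\log(1/p)/d})$. (The $\log^3(k/p)$ in the statement is looser than the sharp $\log(1/p)$, so a union bound over the $k$ eigenvalues with cruder tail estimates is more than enough — this is presumably why the exponent $3$ appears; I would not fight for the optimal constant.) Plugging into the second‑order bound from Step 1 and using $\|g_\parallel\|\le 1$ yields $\langle\hat x_{LSQ},g\rangle-\langle\hat x_{NES},g\rangle = O\!\big(\tfrac kd\log^3(k/p)\big)^{1/2}$ after collecting the square‑root; restoring the normalization gives the stated $\sqrt{\tfrac kd\log^3(k/p)}\,\|g\|^2$.

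The main obstacle is Step 1: getting the clean relation $\langle\hat x_{NES},g\rangle \ge \langle\hat x_{LSQ},g\rangle - (\text{eigenvalue spread})^2\cdot\|g_\parallel\|$ rigorously rather than heuristically. The inequality $\dfrac{\sum\lambda_i w_i^2}{(\sum\lambda_i^2 w_i^2)^{1/2}} \ge \big(\sum w_i^2\big)^{1/2} - C\big(\tfrac{\lambda_{\max}-\lambda_{\min}}{\lambda_{\min}}\big)^2\big(\sum w_i^2\big)^{1/2}$ is a Kantorovich‑type/Cauchy–Schwarz‑defect estimate: one can prove it by writing $\lambda_i = \bar\lambda(1+\delta_i)$ with $|\delta_i|\le\rho$, expanding numerator and denominator to second order in $\rho$, and noting the $\bar\lambda$ cancels between numerator and denominator so only the relative spread $\rho$ survives — which is exactly the scale‑invariant quantity that Gaussian concentration controls. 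Everything else (homogeneity reduction, the eigenvalue tail bound, assembling constants) is routine.
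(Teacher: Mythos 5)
Your proposal is essentially correct but follows a genuinely different route from the paper's. The paper works with the \emph{unnormalized} estimators $\hat{x}_{NES}=A^TA\bm{g}$ and $\hat{x}_{LSQ}=A^T(AA^T)^{-1}A\bm{g}$, writes the gap as $\bm{g}^TA^T\left[(AA^T)^{-1}-I\right]A\bm{g}$, and bounds it by $\norm{(AA^T)^{-1}-I}\cdot\norm{A\bm{g}}^2$, controlling the first factor via a Neumann-series expansion plus the Gittens--Tropp covariance-estimation bound and the second via Gaussian concentration of the individual inner products $\inner{\delta_i}{\bm{g}}$ and norms $\norm{\delta_i}$; this is a first-order-in-the-spectral-deviation argument. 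You instead normalize both estimators to unit vectors, diagonalize $AA^T$, obtain the exact expressions $\norm{g_\parallel}$ and $\left(\sum_i\lambda_iw_i^2\right)/\left(\sum_i\lambda_i^2w_i^2\right)^{1/2}$, and observe that their difference is a Cauchy--Schwarz defect --- indeed the squared gap equals $(SQ-T^2)/(S+2T+Q)$ with $S=\sum w_i^2$, $T=\sum\delta_iw_i^2$, $Q=\sum\delta_i^2w_i^2$, which is nonnegative and $O(\rho^2 S)$ for relative spread $\rho$ --- so your bound is \emph{second order} in the eigenvalue spread, scale-invariant in the normalization of $A$, and yields the additional qualitative fact that the normalized LSQ estimate always dominates NES. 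Plugging in the Davidson--Szarek singular-value bound then gives something at least as strong as the stated $O(\sqrt{(k/d)\log^3(k/p)})$. Two things to fix before this is airtight: (i) your two conventions are mixed --- the clean identities $\inner{\hat{x}_{LSQ}}{\bm{g}}=\norm{g_\parallel}$ hold only for unit-normalized estimators, for which the gap scales as $\norm{g}^1$, not $\norm{g}^2$; the $\norm{g}^2$ in the theorem reflects the unnormalized convention, where the gap is first order in the spread (it equals $\sum_i(1-\lambda_i)w_i^2$ up to scaling) rather than second order, so you should commit to one convention and restate accordingly (either version yields a bound of the claimed form); (ii) the normalization $M=\frac1kAA^T$ should be $\frac1dAA^T$ for standard Gaussian entries, though this is immaterial since your argument only uses the relative spread.
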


Note that when we work in the underdetermined setting, i.e., when $k\ll d$
(which is the setting we are interested in), the right hand side bound
becomes vanishingly small. Thus, the equivalence indeed holds. In fact,
using the precise statement (given and proved in
Appendix~\ref{app:proofs}), we can show
that Theorem~\ref{thm:neslsq} provides us with a non-vacuous equivalence
bound. 
 Further, it turns out that one can exploit this equivalence to prove that the algorithm proposed in
\cite{query-efficient} is not only natural but optimal, as the least-squares 
estimate is an information-theoretically optimal gradient estimate in the
regime where $k = d$, and an error-minimizing estimator in the regime where
$k << d$.

\begin{theorem}[Least-squares optimality (Proof in Appendix~\ref{app:proofs})]
    For a linear regression problem $y = A\bm{g}$ with known $A$ and $y$, unknown $\bm{g}$, and
    isotropic Gaussian errors, the least-squares estimator is finite-sample efficient, i.e. the
    minimum-variance unbiased (MVU) estimator of the latent vector $\bm{g}$.
\end{theorem}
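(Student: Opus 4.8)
The plan is to recognize this as the classical Gauss--Markov/Cram\'er--Rao optimality statement for ordinary least squares under Gaussian noise, and to prove it via the multivariate Cram\'er--Rao lower bound (using, in addition, the Lehmann--Scheff\'e theorem to get uniqueness of the MVU estimator). First I would fix the model precisely: write $y = A\bm{g} + \varepsilon$ with $\varepsilon \sim \mathcal{N}(0, \sigma^2 I_k)$, and observe that in the identifiable regime ($k \ge d$, $A$ of full column rank) the minimum-norm solution of $A\widehat{g} = y$ is $\widehat{g}_{LSQ} = (A^TA)^{-1}A^Ty$. I would flag here the one definitional caveat: when $k < d$ the vector $\bm{g}$ is not identifiable, so ``unbiased estimator of $\bm{g}$'' must be read as an estimator of the identifiable part of $\bm{g}$ (its projection onto the row space of $A$); the argument below applies verbatim after that reparametrization, so it suffices to treat $k \ge d$.

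Next I would assemble the two computational ingredients. \emph{(i)} The estimator is unbiased, $\mathbb{E}[\widehat{g}_{LSQ}] = (A^TA)^{-1}A^TA\bm{g} = \bm{g}$, with covariance $\mathrm{Cov}(\widehat{g}_{LSQ}) = \sigma^2 (A^TA)^{-1}$. \emph{(ii)} The Gaussian log-likelihood is $\ell(\bm{g}) = c - \tfrac{1}{2\sigma^2}\norm{y - A\bm{g}}^2$ (so the MLE and the least-squares estimator coincide); the score is $\nabla_{\bm{g}}\ell = \tfrac{1}{\sigma^2}A^T(y - A\bm{g})$, hence the Fisher information matrix is $\mathcal{I}(\bm{g}) = \tfrac{1}{\sigma^2}A^TA$, independent of $\bm{g}$. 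Comparing, $\mathrm{Cov}(\widehat{g}_{LSQ}) = \mathcal{I}(\bm{g})^{-1}$ \emph{exactly}.

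Then I would invoke the regularity of the Gaussian location family (smooth dependence on $\bm{g}$, support not depending on $\bm{g}$, legitimacy of differentiating under the integral) so that the multivariate Cram\'er--Rao inequality applies: every unbiased estimator $\widetilde{g}$ of $\bm{g}$ satisfies $\mathrm{Cov}(\widetilde{g}) \succeq \mathcal{I}(\bm{g})^{-1}$ in the Loewner order. Since $\widehat{g}_{LSQ}$ meets this bound, it is finite-sample efficient, and in particular has minimum variance in every coordinate (indeed for every linear functional $\inner{c}{\bm{g}}$) among unbiased estimators. For uniqueness of the MVU estimator I would give the alternative Lehmann--Scheff\'e route: $A^Ty$ is a complete sufficient statistic for this exponential family, and $\widehat{g}_{LSQ}$ is a function of it unbiased for $\bm{g}$, so it is \emph{the} MVU estimator.

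The main obstacle here is not analytic --- every step is a one-line computation --- but definitional: ensuring the statement is read in a regime where $\bm{g}$ (or the relevant functional of it) is identifiable, and recording that the minimum-norm least-squares solution is the correct object in the under-determined case so that ``unbiased'' and ``MVU'' are even meaningful. A secondary point worth a remark is the status of $\sigma^2$: if it is unknown the Fisher information becomes block-diagonal in $(\bm{g},\sigma^2)$ with the $\bm{g}$-block unchanged, so the conclusion about $\bm{g}$ is unaffected; I would note this so the proof covers both readings.
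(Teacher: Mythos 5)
Your proposal is correct and follows essentially the same route as the paper: both prove the result via the multivariate Cram\'er--Rao bound for the Gaussian linear model, computing the Fisher information $\mathcal{I}(\bm{g}) = \sigma^{-2}A^TA$. The only methodological difference is cosmetic: you compare $\mathrm{Cov}(\widehat{g}_{LSQ}) = \sigma^2(A^TA)^{-1}$ directly to $\mathcal{I}(\bm{g})^{-1}$, whereas the paper verifies the equivalent attainment condition $\mathcal{I}(\bm{g})(\hat{\theta}-\theta) = \nabla_{\bm{g}}\log p$ without ever computing the covariance; your Lehmann--Scheff\'e remark adds uniqueness, which the paper does not claim. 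One point in your write-up is genuinely more careful than the paper's own proof: the identifiability caveat. The paper works with the minimum-norm form $\hat{x}_{LSQ}=A^T(AA^T)^{-1}y$, which is an unbiased estimator of $\bm{g}$ only when $A^T(AA^T)^{-1}A = I$, i.e.\ only at $k=d$; for $k<d$ the Fisher matrix $A^TA$ is singular, so the step in the paper that multiplies by $[\mathcal{I}(\theta)]^{-1}$ on both sides is not literally valid, and "unbiased estimator of $\bm{g}$" must be reinterpreted as you do (estimation of the projection of $\bm{g}$ onto the row space of $A$, or restriction to the identifiable regime). Your explicit handling of that regime, and the observation that the conclusion survives an unknown $\sigma^2$ by block-diagonality of the Fisher matrix, are both correct and strengthen the argument.
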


\begin{theorem}[Least-squares optimality (Proof in~\cite{biaslinear})]
    In the underdetermined setting, i.e. when $k << d$, the minimum-norm
    least squares estimate ($\hat{x}_{LSQ}$ in Theorem~\ref{thm:neslsq}) is
    the minimum-variance (and thus minimum-error, since bias is fixed)
    estimator with no empirical loss.
\end{theorem}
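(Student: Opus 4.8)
The plan is to reduce the statement to a claim about the null space of the query matrix $A$ and then to combine an orthogonal decomposition of the estimation error with the minimum-mean-squared-error property of a conditional mean. Throughout I condition on $A$, which we may take to have full row rank $k<d$ (true almost surely for a Gaussian $A$); write $N=\ker(A)$, so $\dim N=d-k$ and $N^{\perp}$ is the row space of $A$. The one structural fact I need is that $\hat{x}_{LSQ}=A^{+}y=A^{T}(AA^{T})^{-1}y$ equals the orthogonal projection $\Pi_{N^{\perp}}g^{*}$ of the latent vector onto $N^{\perp}$ (indeed $y=Ag^{*}$ and $A^{+}A=\Pi_{N^{\perp}}$), and that the set of \emph{all} estimators with no empirical loss --- i.e. all $\hat{x}$ satisfying $A\hat{x}=y$ --- is exactly the affine subspace $\hat{x}_{LSQ}+N$. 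Consequently every such estimator can be written $\hat{x}=\hat{x}_{LSQ}+n$ with $n\in N$ (possibly depending on the data and on the algorithm's internal randomness), and since $g^{*}=\hat{x}_{LSQ}+\Pi_{N}g^{*}$, subtraction gives $\hat{x}-g^{*}=n-\Pi_{N}g^{*}\in N$. Two things fall out immediately: every interpolator recovers the identifiable component $\Pi_{N^{\perp}}g^{*}$ \emph{exactly} (with zero bias and zero variance in those directions), and the entire error, $\|\hat{x}-g^{*}\|^{2}=\|n-\Pi_{N}g^{*}\|^{2}$, is confined to the unidentified subspace $N$ and depends only on the offset $n$.

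I would then feed this into the bias--variance decomposition $\mathbb{E}\|\hat{x}-g^{*}\|^{2}=\|\mathbb{E}\hat{x}-g^{*}\|^{2}+\mathbb{E}\|\hat{x}-\mathbb{E}\hat{x}\|^{2}$. This is where ``bias is fixed'' enters: $y$ is a function of $\Pi_{N^{\perp}}g^{*}$ alone and carries no information whatsoever about $\Pi_{N}g^{*}$, so the identifiable part of the target is nailed by every interpolator, the unidentifiable part contributes an irreducible common floor, and the only remaining degree of freedom is $n$. Invoking the rotational symmetry of the Gaussian model underlying Theorem~\ref{thm:neslsq} (a rotationally-invariant prior on $g^{*}$, or an equivalent equivariance/minimax argument), $\Pi_{N}g^{*}$ stays mean-zero after conditioning on $(A,y)$, so for any data-measurable $n$ one gets $\mathbb{E}\big[\|n-\Pi_{N}g^{*}\|^{2}\mid A,y\big]=\|n\|^{2}+\mathbb{E}\|\Pi_{N}g^{*}\|^{2}$, uniquely minimized at $n\equiv 0$, i.e. at $\hat{x}=\hat{x}_{LSQ}$; any nonzero $n$ only adds squared bias (or, if $n$ is randomized, variance) on top of the floor $\mathbb{E}\|\Pi_{N}g^{*}\|^{2}$, whereas $n\equiv0$ also drives the variance of the error as low as it can go. Averaging over $A$ gives the global claim, and it exhibits the ``minimum-variance $\Rightarrow$ minimum-error'' implication explicitly: once the systematic part is matched (the $N^{\perp}$ directions being pinned and the $N$-floor being common), MSE and variance differ only by that fixed constant. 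As a sanity check, in the well-determined limit $k\to d$ this degenerates to the classical Gauss--Markov statement that $A^{-1}y$ is the minimum-variance unbiased estimator.

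The step I expect to be the real obstacle is not any single computation but making ``bias is fixed'' rigorous: one has to commit to a precise notion of bias and to the weakest symmetry/regularity hypothesis on $g^{*}$ under which the $\Pi_{N}g^{*}$ contribution is genuinely common across all zero-empirical-loss estimators --- without such a hypothesis a perverse, data-dependent choice of $n$ can shift the bias, so the literal ``all interpolators have the same bias'' reading needs to be restricted (e.g. to estimators that are equivariant, or whose null-space offset has zero mean). Once that bookkeeping is settled, the remaining ingredients --- the identity $\hat{x}_{LSQ}=\Pi_{N^{\perp}}g^{*}$, the orthogonal split of the error into an $n$-part and a $\Pi_{N}g^{*}$-part, and the MMSE-optimality of $n\equiv0$ --- are routine.
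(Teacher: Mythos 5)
The paper does not actually supply a proof of this statement---it is deferred entirely to the cited reference---so there is no internal argument to compare yours against; what matters is whether your reconstruction is sound. It essentially is, and your decomposition is the right one: every zero-empirical-loss estimator is $\hat{x}_{LSQ}+n$ with $n\in\ker(A)$, the row-space component of $g^{*}$ is recovered exactly, and the entire error lives in $\ker(A)$, so the comparison reduces to the choice of the null-space offset. Two points deserve care. First, you are right that the crux is making ``bias is fixed'' precise, and your fix (a rotationally invariant prior on $g^{*}$ so that $\mathbb{E}[\Pi_{N}g^{*}\mid A,y]=0$, whence $\mathbb{E}\|n-\Pi_{N}g^{*}\|^{2}=\|n\|^{2}+\mathbb{E}\|\Pi_{N}g^{*}\|^{2}$) is a legitimate Bayesian route; the more standard frequentist route, closer in spirit to the paper's companion MVU theorem, keeps the noisy observation model $y=A g^{*}+\varepsilon$, writes the error as $-\Pi_{N}g^{*}+A^{+}\varepsilon+n$, and observes that $y$ carries no information about $\Pi_{N}g^{*}$, so any admissible $n$ (deterministic adds squared bias, randomized adds variance, and the cross-covariance between the row-space and null-space parts has zero trace by orthogonality) can only sit on top of the common floor. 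Second, and related: if you literally inherit the noiseless model of Theorem~1, the sampling variance of every deterministic interpolator is zero and ``minimum-variance'' is vacuous, so you must commit either to the noise model of the MVU theorem or to your prior-over-$g^{*}$ reading before the statement is even well posed. Without one of these hypotheses the claim is false as literally stated---an oracle offset $n=\Pi_{N}g^{*}$ interpolates with zero error---so the restriction you flag (to data-measurable, or zero-mean, null-space offsets) is not optional bookkeeping but the actual content of the theorem.
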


\section{Black-box adversarial attacks with priors}
\label{sec:approach}
The optimality of least squares strongly suggests that we have reached the
limit of query-efficiency of black-box adversarial attacks. But is this really the case?
Surprisingly, we show that an improvement {\em is} still
possible. The key observation is that the optimality we established of
least-squares (and by Theorem~\ref{thm:neslsq}, the NES approach
in~\citep{query-efficient}) holds only for the most
basic setting of the gradient estimation problem, a setting where we
assume that the target gradient is a truly arbitrary and completely unknown
vector. 

However, in the context we care about this assumption does not hold --
there is actually plenty of prior knowledge about the gradient available.
Firstly, the input with respect to which we compute the gradient is not
arbitrary and exhibits locally predictable structure which is consequently
reflected in the gradient. Secondly, when performing iterative gradient
attacks (e.g. PGD), the gradients used in successive iterations are likely
to be heavily correlated.

The above observations motivate our focus on \textit{prior information} as
an integral element of the gradient estimation problem.
Specifically, we enhance Definition~\ref{def:gradient_estimation} by
making its objective
\begin{equation}
  \label{eq:gep-priors}
  \mathbb{E}\left[ \widehat{g}^Tg^*\right | I], \text{
where $I$ is prior information available to us.}
\end{equation}

This change in perspective gives rise to two important questions:
\textit{does there exist prior information that can be useful to us?}, and
\textit{does there exist an algorithmic way to exploit this information?}
We show that the answer to both of these questions is affirmative.

\subsection{Gradient priors}
Consider a gradient $\nabla_x L(x, y)$ of the loss function corresponding
to some input $(x,y)$. Does there exist some kind of prior that can be
extracted from the dataset $\{x_i\}$, in general, and the input $(x,y)$ in
particular, that can be used as a predictor of the gradient? We demonstrate that
it is indeed the case, and give two example classes of such priors.

\paragraph{Time-dependent priors} The first class of priors we consider are
time-dependent priors, a standard example of which is what we refer to as
the ``multi-step prior.'' We
find that along the trajectory taken by estimated gradients, successive
gradients are in fact heavily correlated. We show
this empirically by taking steps along the optimization path generated by
running the NES estimator at each point, and plotting the normalized inner
product (cosine similarity) between successive gradients, given by
\begin{equation}
    \label{eq:cosine_similarity}
    \frac{\langle \nabla_x L(x_t, y), \nabla_x L(x_{t+1}, y) \rangle}{||\nabla_x
    L(x_t, y)||_2  ||\nabla_x L(x_{t+1}, y)||_2} \qquad t \in \{1\ldots T-1\}.
\end{equation}

\begin{figure}[htbp]
  \begin{minipage}[t]{0.5\linewidth-0.34cm}
    \begin{centering}
      \includegraphics{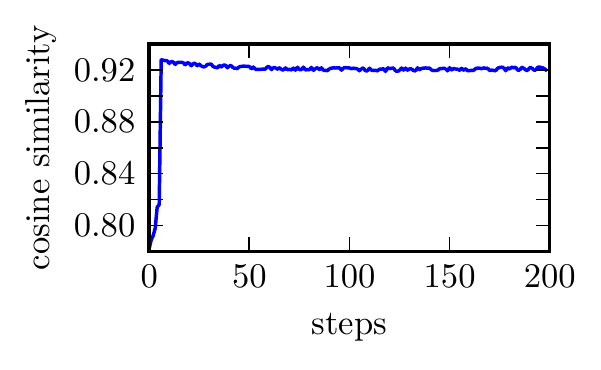}
    \end{centering}
    \caption{Cosine similarity between the gradients at the current
	and previous steps along the optimization trajectory of NES PGD
attacks, averaged over 1000 random ImageNet images.}
    \label{fig:successive_correlation}
  \end{minipage}
  \hspace{0.5cm}
  \begin{minipage}[t]{0.5\linewidth-0.34cm}
    \begin{centering}
      \includegraphics{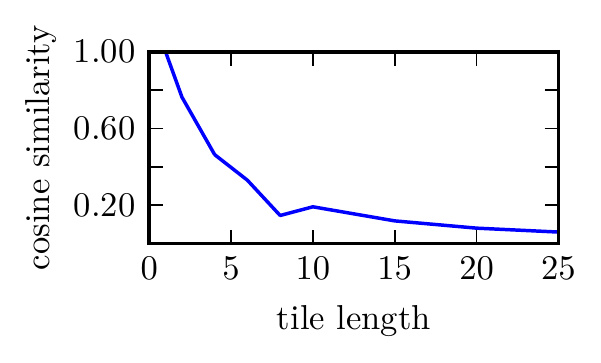}
    \end{centering}
    \caption{Cosine similarity of ``tiled'' image gradient with
	original image gradient versus the length of the square tiles,
averaged over 5,000 randomly selected ImageNet images.}
    \label{fig:tiling}
  \end{minipage}
\end{figure}

Figure~\ref{fig:successive_correlation} demonstrates that there indeed is a non-trivial
correlation between successive gradients---typically, the gradients of
successive steps (using step size from~\cite{query-efficient}) have a cosine
similarity of about 0.9. Successive gradients continue to correlate at
higher step sizes: Appendix~\ref{app:omitted} shows that the trend continues even
at step size 4.0 (a typical value for the \textit{total} perturbation
bound $\varepsilon$). This indicates that there indeed is a potential
gain from incorporating this correlation into our iterative optimization.
To utilize this gain, we intend to use the
gradients at time $t-1$ as a prior for the gradient at time $t$, where both
the prior and the gradient estimate itself evolve over iterations.

\paragraph{Data-dependent priors}
We find that the time-dependent prior discussed above is not the only type of prior
one can exploit here. Namely, we can also use the structure of the inputs
themselves to reduce query complexity (in fact, the existence of such
data-dependent priors is what makes machine learning successful in the
first place).

In the case of image classification, a simple and heavily exploited example
of such a prior stems from the fact that images tend to exhibit a spatially
local similarity (i.e. pixels that are close together tend to be similar).
We find that this similarity also extends to the gradients:
specifically, whenever two coordinates $(i, j)$ and $(k, l)$ of
$\nabla_{x} L(x, y)$ are close, we
expect $\nabla_{x} L(x, y)_{ij} \approx \nabla_{x} L(x,y)_{kl}$
too. To corroborate and quantify this phenomenon, we compare $\nabla_x
L(x,y)$ with an average-pooled, or
``tiled'', version (with ``tile length'' $k$) of the same
signal. An example of such an average-blurred gradient can be seen in
Appendix~\ref{app:omitted}. More concretely, we apply to the gradient the mean
pooling operation with kernel size $(k,k,1)$ and stride $(k,k,1)$, then upscale the spatial dimensions by $k$. We then
measure the cosine similarity between the average-blurred gradient and the
gradient itself. Our results, shown in Figure~\ref{fig:tiling}, demonstrate
that the gradients of images are locally similar enough to allow for
average-blurred gradients to maintain relatively high cosine similarity
with the actual gradients, even when the tiles are large. Our results suggest
that we can reduce the dimensionality of our problem by a factor of $k^2$
(for reasonably large $k$) and still estimate a vector pointing close to
the same direction as the original gradient. This factor, as we show later,
leads to significantly improved black-box adversarial attack performance.

\subsection{A framework for gradient estimation with priors}
Given the availability of these informative gradient priors, we now
need a framework that enables us to easily incorporate these priors into
our construction of black-box adversarial attacks. Our proposed method builds on the
framework of \textit{bandit optimization}, a fundamental tool in online
convex optimization~\cite{hazan-oco}. In the bandit optimization framework,
an agent plays a game that consists of a sequence of rounds. In round $t$,
the agent must choose a valid action, and then by playing the action incurs a loss given by a
loss function $\ell_t(\cdot)$ that is unknown to the agent. After playing
the action, he/she only learns the loss
that the chosen action incurs; the loss function is specific to
the round $t$ and may change arbitrarily between rounds. The goal of the
agent is to minimize the average loss incurred over all rounds, and the success
of the agent is usually quantified by comparing the total loss incurred to
that of the \textit{best expert in hindsight} (the best single-action
policy). By the nature of this formulation, the rounds of this game can not
be treated as independent --- to perform well, the agent needs to keep
track of some latent record that aggregates information learned over a
sequence of rounds. This latent record usually takes a form of a
vector $v_t$ that is constrained to a specified (convex) set
$\mathcal{K}$. As we will see, this aspect of the bandit
optimization framework will provide us with a convenient way to incorporate
prior information into our gradient prediction.

\paragraph{An overview of gradient estimation with bandits.} We can cast
the gradient estimation problem as an bandit optimization problem
in a fairly direct manner. Specifically, we let the action at each round
$t$ be a gradient estimate $g_t$ (based on our latent vector $v_t$), and the loss $\ell_t$
correspond to the (negative) inner product between this prediction and the
actual gradient. Note that we will never have a direct access to this loss
function $\ell_t$ but we are able to evaluate its value on a particular
prediction vector $g_t$ via the finite differences method
\eqref{eq:direct_fd} (which is all that the bandits optimization
framework requires us to be able to do).

Just as this choice of the loss function $\ell_t$ allows us to quantify
performance on the gradient estimation problem, the 
latent vector $v_t$ will allow us to algorithmically incorporate prior
information into our predictions.
Looking at the two example priors we consider, the time-dependent prior will be
reflected by carrying over the latent vector between the gradient
estimations at different points. Data-dependent priors will be captured
by enforcing that our latent vector has a particular structure. For the
specific prior we quantify in the preceding section (data-dependent prior
for images), we will simply reduce the dimensionality of the latent vector via
average-pooling (``tiling``), removing the need for extra queries
to discern components of the gradient that are spatially close.

\subsection{Implementing gradient estimation in the bandit framework}
We now describe our bandit framework for adversarial example generation in
more detail. Note that the algorithm is general and can be used to
construct black-box adversarial examples where the perturbation is
constrained to any convex set ($\ell_p$-norm
constraints being a special case). We discuss the algorithm in
its general form, and then provide versions explicitly applied to
the $\ell_2$ and $\ell_\infty$ cases.

As previously mentioned, the latent vector $v_t \in \mathcal{K}$ serves as
a prior on the gradient for the corresponding round $t$ -- in fact, we make
our prediction $g_t$ be exactly $v_t$ projected onto the appropriate space,
and thus we set $\mathcal{K}$ to be an extension of the space of valid
adversarial perturbations (e.g. $\mathbb{R}^n$ for $\ell_2$ examples,
$[-1,1]^n$ for $\ell_\infty$ examples). Our loss function $\ell_t$
is defined as
\begin{equation}
    \ell_t(g) = -\inner{\nabla L(x, y)}{\frac{g}{\norm{g}}},
\end{equation}
for a given gradient estimate $g$, where we access this inner product via
finite differences. Here, $L(x, y)$ is the classification loss on an image $x$ with true class
$y$. 

The crucial element of our algorithm will thus be the method of updating the latent vector $v_t$. 
We will adapt here the canonical
``reduction from bandit information''~\citep{hazan-oco}.
Specifically, our update procedure is parametrized by an estimator
$\Delta_t$ of the gradient $\nabla_v \ell_t(v)$, and a first-order update
step $\mathcal{A}\ (\mathcal{K} \times \mathbb{R}^{\dim(\mathcal{K})} \rightarrow
\mathcal{K}),$ 
which maps the latent vector $v_t$ and the estimated gradient of $\ell_t$
with respect to $v_t$ (which we denote $\Delta_t$) to a new latent vector $v_{t+1}$. The
resulting general algorithm is presented as Algorithm \ref{alg:general}.

\begin{algorithm}
\caption{Gradient Estimation with Bandit Optimization}\label{alg:general}
\begin{algorithmic}[1]
    \Procedure{Bandit-Opt-Loss-Grad-Est}{$x, y_{init}$} 
    \State $v_0 \gets \mathcal{A}(\phi) $ 
    \For{each round $t=1, \ldots, T$}
    \State $//$ Our loss in round $t$ is $\ell_t(g_t) = - \inner{\nabla_x
    L(x, y_{init})}{g_t}$
    \State $g_t \gets v_{t-1}$
    \State $\Delta_t \gets \textsc{Grad-Est}(x, y_{init}, v_{t-1})\ //$
    Estimated Gradient of $\ell_t$
    \State $v_{t} \gets \mathcal{A}(v_{t-1}, \Delta_t)$
    \EndFor
    \State $g\gets v_T$
    \State\Return $\Pi_{\partial \mathcal{K}}\left[g\right]$
\EndProcedure
\end{algorithmic}
\end{algorithm}

In our setting, we make the estimator $\Delta$ of the gradient $-\nabla_v
\inner{\nabla L(x, y)}{v}$ of the loss $\ell$ be the standard spherical
gradient estimator (see \cite{hazan-oco}). We take a two-query estimate of
the expectation, and employ antithetic sampling which results in the
estimate being computed as

\begin{equation}
    \Delta = \frac{\ell(v+\delta\bm{u}) -
\ell(v-\delta\bm{u})}{\delta}\bm{u},
\end{equation}
where $\bm{u}$ is a Gaussian vector sampled from $\mathcal{N}(0, \frac{1}{d}I)$.
The resulting algorithm for calculating the gradient estimate given the current latent vector $v$, input $x$ and the initial label $y$ is Algorithm~\ref{alg:grad-est}. 

\begin{algorithm}
\caption{Single-query spherical estimate of $\nabla_v \inner{\nabla L(x,
y)}{v}$}\label{alg:grad-est}
\begin{algorithmic}[1]
    \Procedure{Grad-Est}{$x, y, v$} 
    \State $u \gets \mathcal{N}(0, \frac{1}{d}I)\ //$ Query vector 
    \State $\{q_1, q_2\} \gets \{v + \delta\bm{u}, v - \delta\bm{u}\}\
    //$ Antithetic samples
    \State $\ell_t(q_1) = - \inner{\nabla L(x, y)}{q_1} \approx
    \frac{L(x,y) - L(x+\epsilon\cdot q_1, y)}{\epsilon}\ //$ Gradient
    estimation loss at $q_1$ 
    \State $\ell_t(q_2) = - \inner{\nabla L(x, y)}{q_2} \approx
    \frac{L(x,y)-L(x+\epsilon\cdot q_2, y)}{\epsilon} //$ Gradient
    estimation loss at $q_2$ 
\State $\bm{\Delta} \gets \frac{\ell_t(q_1) - \ell_t(q_2)}{\delta}\bm{u} =  \frac{L(x+\epsilon q_2, y) -
    L(x+\epsilon q_1, y)}{\delta\epsilon}\bm{u}$
    \State $//$ Note that due to cancellations we can actually evaluate
    $\bm{\Delta}$ with only two queries to $L$
    \State \Return $\bm{\Delta}$
\EndProcedure
\end{algorithmic}
\end{algorithm}

A crucial point here is that the above gradient estimator $\Delta_t$ 
parameterizing the bandit reduction has no direct relation to the ``gradient
estimation problem'' as defined in Section
\ref{sec:gradient_estimation_problem}. It is simply a general mechanism by which we
can update the latent vector $v_t$ in bandit optimization. It is the
actions $g_t$ (equal
to $v_t$) which provide proposed solutions to the gradient estimation
problem from Section \ref{sec:gradient_estimation_problem}.

The choice of the update rule $\mathcal{A}$ tends to be natural once the
convex set $\mathcal{K}$ is known. For $\mathcal{K} = \mathbb{R}^n$, we can simply use
gradient ascent:
\begin{equation}
    v_{t} = \mathcal{A}(v_{t-1}, \Delta_{t}) := v_{t-1} + \eta\cdot\bm{\Delta}_{t}
\end{equation}

and the exponentiated gradients (EG) update when the constraint is an
$\ell_\infty$ bound (i.e. $\mathcal{K} = [-1, 1]^n$):
\begin{align*}
    p_{t-1} &= \frac{1}{2}\left(v_{t-1}+1\right) \\
    p_{t} &= \mathcal{A}(g_{t-1}, \Delta_{t}) :=
    \frac{1}{Z}p_{t-1}\exp(\eta\cdot \Delta_{t}) 
    &\text{ s.t. } Z =
    p_{t-1}\exp(\eta\cdot \Delta_{t}) + (1-p_{t-1})\exp(-\eta\cdot
    \Delta_{t}) \\
    v_t &= 2p_t - 1
\end{align*}

Finally, in order to translate our gradient estimation algorithm
into an efficient method for constructing black-box adversarial examples,
we interleave our iterative gradient estimation algorithm with an iterative
update of the image itself, using the boundary projection of $g_t$ in place
of the gradient (c.f. \eqref{eq:pgd_wb}). This
results in a general, efficient, prior-exploiting algorithm for
constructing black-box adversarial examples. The resulting algorithm in the
$\ell_2$-constrained case is shown in Algorithm~\ref{alg:l2}.

\begin{algorithm}
\caption{Adversarial Example Generation with Bandit Optimization
for $\ell_2$ norm perturbations}\label{alg:l2}
\begin{algorithmic}[1]
    \Procedure{Adversarial-Bandit-L2}{$x_{init}, y_{init}$} 
    \State $//\ C(\cdot)$ returns top class
    \State $v_0 \gets {\bf 0}_{1\times d}\ //$ If data prior, $d < \text{dim}(x)$; $v_t$ ($\Delta_t$) up (down)-sampled before (after) line $8$
    \State $x_0 \gets x_{init}\ //$ Adversarial image to be constructed
    \While{$C(x) = y_{init}$}
    \State $g_t \gets v_{t-1}$
    \State $x_{t} \gets x_{t-1} + h \cdot \frac{g_t}{||g_t||_2}\ //$
    Boundary projection $\frac{g}{||g_t||}$ standard PGD: c.f.~\citep{mlnotes} 
    \State $\Delta_t \gets \textsc{Grad-Est}(x_{t-1}, y_{init}, v_{t-1})\ //$
    Estimated Gradient of $\ell_t$
    \State $v_t \gets v_{t-1} + \eta \cdot \Delta_t$
    \State $t\gets t+1$
    \EndWhile
    \Return $x_{t-1}$
\EndProcedure
\end{algorithmic}
\end{algorithm}

\section{Experiments and evaluation}
We evaluate our bandit approach described in Section~\ref{sec:approach} and
the natural evolutionary strategies (NES) approach of
\cite{query-efficient} on their effectiveness in generating untargeted
adversarial examples. We consider both the $\ell_2$ and $\ell_\infty$ threat models
on the ImageNet~\citep{ilsvrc15} dataset, in terms of success rate and query
complexity. We give results for attacks on the Inception-v3, Resnet-50, and
VGG16 classifiers. We further investigate loss
and gradient estimate quality over the optimization trajectory in each
method. 

In evaluating our approach, we test both the bandit approach with time
prior (Bandits$_T$), and our bandit approach with
the given examples of both the data and time priors (Bandits$_{TD}$).
We use 10,000 randomly selected images (scaled to $[0,1]$) to
evaluate all approaches. For NES, Bandits$_T$, and
Bandits$_{TD}$ we found hyperparameters (given in
Appendix~\ref{app:hyperparameters}, along with the experimental parameters) via grid search. 

\subsection{Results}
\label{subsec:results}
For ImageNet, we record the effectiveness of the different approaches in both threat
models in Table~\ref{tab:multistep} ($\ell_2$ and $\ell_\infty$ perturbation
constraints), where we show the attack success rate
and the mean number of queries (of the successful attacks) needed to generate an
adversarial example for the Inception-v3 classifier (results for other
classifiers in Appendix~\ref{app:classifiers}). For all attacks, we limit
the attacker to at most 10,000 oracle queries. As shown in
Table~\ref{tab:multistep}, our bandits framework with both
data-dependent and time prior (Bandits$_{TD}$),  is six and three times
less failure-prone than the previous state of the
art (NES~\citep{query-efficient}) in the $\ell_\infty$
and $\ell_2$ settings, respectively. Despite the higher success rate, our
method actually uses around half as many queries as NES. In particular,
when restricted to the inputs on which NES is successful in generating
adversarial examples,
our attacks are 2.5 and 5 times as query-efficient for the $\ell_\infty$
and $\ell_2$ settings, respectively. 

We also further quantify the
performance of our methods in terms of black-box attacks, and gradient
estimation. Specifically, we first measure average queries
per success after reaching a certain success rate
(Figure~\ref{fig:amalgam}a), which indicates the
dependence of the query count on the desired success rate. The data shows
that for any fixed success rate, our methods are more query-efficient than
NES, and (due to the exponential trend) suggest
that the difference may be amplified for higher success rates. We then
plot the loss of the classifier over time (averaged over all
images), and performance on the gradient
estimation problem for both $\ell_\infty$ and $\ell_2$ cases (which,
crucially, corresponds directly to the expectation we maximize
in~\eqref{eq:gep-priors}. We show these three plots
for $\bm{\ell_\infty}$ in Figure~\ref{fig:amalgam}, and show the
results for $\ell_2$ (which are extremely similar) in
Appendix~\ref{app:results}, along with CDFs showing the success of each
method as a function of the query limit. We find that on every metric in
both threat models, our methods strictly dominate NES in terms of
performance.

\begin{figure}[h]
  \begin{tikzpicture}[every plot/.append style={very thick}]

\begin{groupplot}[group style={group size=3 by 1, horizontal sep=2cm}]
\nextgroupplot[
title={Avg. queries by success \% },
ylabel={\# queries},
x label style={at={(axis description cs:0.5,0.05)},anchor=north},
y label style={at={(axis description cs:0.05,0.5)},anchor=north},
ytick={0,500,1000,1500,2000},
yticklabels={0,,,,2k},
scaled y ticks=false,
axis line style=thick,
xlabel={success rate},
ymin=0, ymax=2000,
xmin=0, xmax=1.0,
axis on top,
xtick={0,0.5,1.0},
xticklabels={0,50,100},
tick pos=both,
width=0.3\textwidth,
height=3cm,
legend cell align={left},
legend columns=-1,
legend entries={NES$\qquad$,Bandits$_T$ (time prior)$\qquad$,Bandits$_{TD}$ (time + data)},
legend style={at={(0.2,1.6)}, anchor={south west}},
]
\addplot [dotted, blue] table[x index=1, y index=0]{fig/data/avgq-vs-success/nes-linf-cdf.csv};
\addplot [dashed, green!50.0!black] table[x index=1, y index=0] {fig/data/avgq-vs-success/banditst-linf-cdf.csv};
\addplot [red] table[x index=1, y index=0] {fig/data/avgq-vs-success/banditstd-linf-cdf.csv};
\nextgroupplot[
title={Average loss},
xlabel={iteration},
ylabel={loss},
x label style={at={(axis description cs:0.5,0.05)},anchor=north},
y label style={at={(axis description cs:0.05,0.5)},anchor=north},
axis line style=thick,
xtick={0,2500,5000,7500,10000},
xticklabels={0,,5k,,10k},
scaled x ticks=false,
xmin=0, xmax=10000,
ymin=0, ymax=18,
axis on top,
ytick={0,3,6,9,12,15,18},
yticklabels={$0$,,,9,,,$18$},
tick pos=both,
width=0.3\textwidth,
height=3cm
]
\addplot [dotted, blue, forget plot] table {fig/data/avg-loss/nes-linf-cdf.csv};
\addplot [dashed, green!50.0!black, forget plot] table {fig/data/avg-loss/bandits-t-linf-cdf.csv};
\addplot [red, forget plot] table {fig/data/avg-loss/bandits-td-linf-cdf.csv};
\nextgroupplot[
title={Correlation with $g^*$ },
xlabel={iteration},
ylabel={cos similarity},
xmin=0, xmax=10000,
ymin=0, ymax=0.1,
axis on top,
x label style={at={(axis description cs:0.5,0.05)},anchor=north},
y label style={at={(axis description cs:0.05,0.5)},anchor=north},
xtick={0,2500,5000,7500,10000},
xticklabels={0,,5k,,10k},
scaled x ticks=false,
scaled y ticks=false,
ytick={0,0.05,0.1},
yticklabels={$0.0$,,$0.1$},
tick pos=both,
axis line style=thick,
width=0.3\textwidth,
height=3cm,
]
\addplot [dotted,blue] table {fig/data/cosine-dists/nes-linf.csv};
\addplot [dashed,green!50.0!black] table {fig/data/cosine-dists/bandits-t-linf.csv};
\addplot [red] table {fig/data/cosine-dists/bandits-td-linf.csv};
\end{groupplot}

\end{tikzpicture}
\vspace*{-1em}
  \caption{\textbf{(left)} Average number of queries per successful image as a
  function of the number of total successful images; at any
  desired success rate, our methods use significantly less
  queries per successful image than NES, and the trend suggests that this
  gap increases with the desired success rate. \textbf{(center)} The loss over time,
  averaged over all images; 
  \textbf{(right)} The correlation of the latent vector with the
  true gradient $g$, which is precisely the
  gradient estimation objective we define.}
  \label{fig:amalgam}
\end{figure}
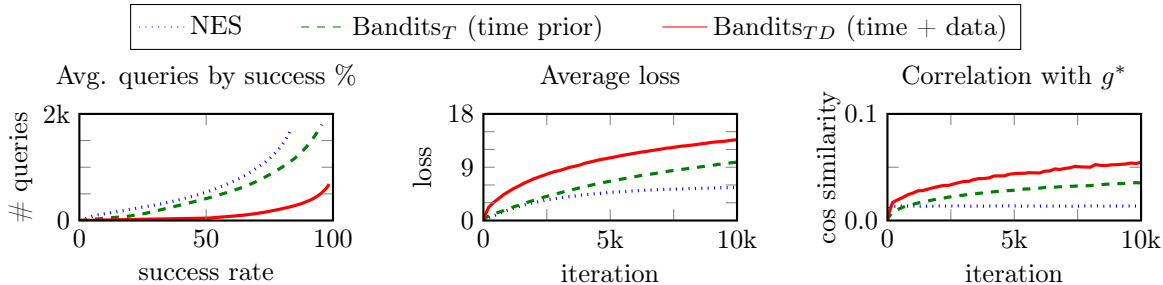

\section{Related work}
All known techniques for generating adversarial examples in the black-box
setting so far rely on either iterative optimization schemes
(our focus) or so-called substitute networks and transferability.

In the first line of work, algorithms use queries to gradually perturb a
given input to maximize a corresponding loss, causing
misclassification. Nelson et. al~\cite{two-class} presented the first such
iterative attack on a special class of binary classifiers.
Later, Xu et. al~\cite{genetic-algos} gave an algorithm for fooling
a real-world system with black-box attacks. Specifically, they fool PDF
document malware classifier by using a genetic algorithms-based attack.
Soon after, Narodytska et. al~\cite{simple-black-box}
described the first black-box attack on {deep neural networks}; the
algorithm uses a greedy search algorithm that selectively changes
individual pixel values. Chen et. al~\cite{zoo} were the first to design
black-box attack based on finite-differences and gradient based
optimization. The method uses coordinate descent to attack black-box neural
networks, and introduces various optimizations to decrease sample
complexity. Building on the work of \cite{zoo}, Ilyas et. al~\cite{query-efficient} designed a black-box attack strategy that also uses
finite differences but via natural evolution strategies (NES) to
estimate the gradients. They then used their algorithm as a primitive in
attacks on more restricted threat models.

In a concurrent line of work, Papernot et. al~\cite{practical-black-box} introduced a method
for attacking models with so-called substitute networks. Here, the attacker
first trains a model -- called a substitute network -- to mimic the target
network's decision boundaries. The attacker then generates adversarial
examples on the substitute network, and uses them to attack the original
target mode. Increasing the rate at which
adversarial examples generated from substitute networks fool the target
model is a key aim of substitute networks work. In \cite{practical-black-box}, the attacker generates a synthetic
dataset of examples labeled by the target classifier using black-box
queries. The attacker then trains a substitute network on the dataset.
Adversarial examples generated with methods developed with recent
methods~\cite{practical-black-box,delving-into}
tend to transfer to a target MNIST classifier.
We note, however, that the overall query efficiency of this type of methods
tends to be worse than that of the gradient estimation based ones. (Their
performance becomes more favorable as one becomes interested in attacking more
and more inputs, as the substitute network has to be trained only once.)

\section{Conclusion}
We develop a new, unifying perspective on black-box adversarial attacks.
This perspective casts the construction of such attacks as a gradient
estimation problem. We prove that a standard least-squares estimator both
captures the existing state-of-the-art approaches to black-box adversarial
attacks, and actually is, in a certain natural sense, an optimal solution
to the problem.

We then break the barrier posed by
this optimality by considering a previously unexplored aspect of the
problem: the fact that there exists plenty of extra prior information about
the gradient that one can exploit to mount a successful adversarial attack.
We identify two examples of such priors: a ``time-dependent'' prior that
corresponds to similarity of the gradients evaluated at similar inputs, and
a ``data-dependent'' prior derived from the latent structure present in the
input space.

Finally, we develop a bandit
optimization approach to black-box adversarial attacks that allows for a seamless integration of such priors. The resulting framework significantly outperforms the state-of-the-art methods, achieving
a factor of two to six improvement in terms of success rate and query
efficiency. Our results thus open a new avenue towards finding priors for
construction of even more efficient black-box adversarial attacks.

\section*{Acknowledgments}
We thank Ludwig Schmidt for suggesting the connection between the least squares method and the natural estimation strategies. 

AI was supported by an Analog Devices Graduate Fellowship.
LE was supported in part by an MIT-IBM Watson AI Lab research grant, the Siebel Scholars Foundation, and NSF Frontier grant CNS-10413920.
AM was supported in part by a Google Research Award, and the NSF grants CCF-1553428 and CNS-1815221.

\newpage
\bibliographystyle{alpha}
\bibliography{paper}

\newpage
\appendix
\section{Proofs}
\label{app:proofs}
\setcounter{theorem}{0}
\begin{theorem}[NES and Least Squares equivalence]
    \label{thm:neslsq}
     Let $\hat{x}_{NES}$ be the Gaussian $k$-query NES estimator of a $d$-dimensional gradient
    $\bm{g}$ and let $\hat{x}_{LSQ}$ be the minimal-norm $k$-query least-squares estimator of $\bm{g}$. For any $p>0$, with probability at least $1-p$ we have that 
    \[
	\inner{\hat{x}_{LSQ}}{\bm{g}} -
	\inner{\hat{x}_{NES}}{\bm{g}} \leq
	O\left(\sqrt{\frac{k}{d} \cdot \log^3\left(\frac{k}{p}\right)}\right)\norm{g}^2,
    \]
    and in particular,
    \[
	\inner{\hat{x}_{LSQ}}{\bm{g}} -
	\inner{\hat{x}_{NES}}{\bm{g}} \leq
	8\sqrt{\frac{2k}{d}\cdot\log^3\left(\frac{2k+2}{p}\right)}\left(1 +
	\frac{\kappa}{\sqrt{d}}\right) ||g||^2
	\]
	with probability at least $1-p$, where
	\[
	\kappa \leq 2\sqrt{\log\left(\frac{2k(k+1)}{p}\right)}.
    \]
\end{theorem}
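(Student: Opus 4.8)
The plan is to reduce the bound to a statement about the spectrum of a small random Gram matrix and then control that spectrum with non-asymptotic random-matrix estimates. Since the objective and both (unit-normalized) estimators are invariant under rescaling $\bm g$, assume $\norm{\bm g}=1$, and by rotational invariance of the Gaussian queries assume $\bm g=e_1$. Let $u_1,\dots,u_k\sim\mathcal N(0,\tfrac1dI)$ be the query directions (with antithetic pairing), let $A\in\mathbb R^{k\times d}$ have these rows, and write $b=A\bm g\in\mathbb R^k$ and $G=AA^\top$. In the exact-query idealization (finite differences return $\inner{u_i}{\bm g}$ exactly — the $\delta$-dependence is a separate, standard error term) both estimators are deterministic functions of $A$, so ``with probability $1-p$'' is over $A$ alone. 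Both $\hat x_{NES}$ and $\hat x_{LSQ}$ lie in the a.s.\ $k$-dimensional subspace $V=\mathrm{span}(u_1,\dots,u_k)$: the minimal-norm least-squares solution is the orthogonal projection $P_V\bm g$, so $\inner{\hat x_{LSQ}}{\bm g}=\norm{P_V\bm g}=\sqrt{b^\top G^{-1}b}$ — which is the largest inner product with $\bm g$ attainable by any unit vector in $V$ — while the NES direction is $\sum_i\inner{u_i}{\bm g}u_i=A^\top b$, giving $\inner{\hat x_{NES}}{\bm g}=\norm{b}^2/\sqrt{b^\top G b}$.

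Next I would show the gap is controlled purely by how far $G$ is from a scalar matrix. Cauchy--Schwarz, $(b^\top b)^2=(b^\top G^{1/2}G^{-1/2}b)^2\le (b^\top Gb)(b^\top G^{-1}b)$, already gives $\inner{\hat x_{LSQ}}{\bm g}\ge\inner{\hat x_{NES}}{\bm g}$ with equality iff $b$ is an eigenvector of $G$; so the whole difference is a measure of the conditioning of $G$. Quantitatively, bounding the two Rayleigh quotients by the extreme eigenvalues of $G$ yields
\[
\inner{\hat x_{LSQ}}{\bm g}-\inner{\hat x_{NES}}{\bm g}\le \norm{b}\Bigl(\lambda_{\min}(G)^{-1/2}-\lambda_{\max}(G)^{-1/2}\Bigr),
\]
and a first-order expansion around $\mathbb E[G]=I$ turns the right-hand factor into $O(\lambda_{\max}(G)-\lambda_{\min}(G))$. (A full Taylor expansion in $E:=G-I$ shows the gap is in fact only $\Theta(\norm{b}\cdot\norm{E}^2)$, but the stated bound needs only the crude version above.)

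It then remains to control two quantities, union-bounded at level $p/2$ each. First, $G=AA^\top$ is a scaled $k\times k$ Wishart matrix with $\mathbb E[G]=I$; with probability $1-p/2$ all its eigenvalues lie in $[1-\varepsilon,1+\varepsilon]$ with $\varepsilon=O(\sqrt{k/d})$ up to the logarithmic correction in the statement. For the $O(\cdot)$ form this is immediate from Davidson--Szarek / Bai--Yin; for the explicit constants I would instead union-bound over the $O(k^2)$ Gram entries $G_{ij}=\inner{u_i}{u_j}$ — each a rescaled $\chi^2$ or sub-exponential variable — bound $\norm{G-I}_{op}$ through these entrywise deviations, and propagate the scalar tail constants through $\lambda_{\min}^{-1/2}-\lambda_{\max}^{-1/2}$; this is exactly where $\kappa\le 2\sqrt{\log(2k(k+1)/p)}$ and the $(1+\kappa/\sqrt d)$ factor enter. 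Second, $\norm{b}=\norm{A\bm g}$: since $b$ is (a rescaling of) the first column of $A$, $\norm{b}^2\sim\tfrac1d\chi^2_k$, so with probability $1-p/2$ we have $\norm{b}\le 1=\norm{\bm g}$ (and more precisely $\norm{b}^2=O(\tfrac kd\log\tfrac1p)$; which estimate one uses is what decides whether the final $k/d$ carries exponent $\tfrac12$ or $\tfrac32$, and the theorem takes the weaker one). Intersecting the two events and collecting the logarithmic factors from the union bounds then gives $O\bigl(\sqrt{\tfrac kd\log^3(\tfrac kp)}\bigr)\norm{\bm g}^2$ (using $\norm{\bm g}=\norm{\bm g}^2$ after normalization) and, with bookkeeping, the explicit refinement.

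The conceptual core — the reduction to comparing $\sqrt{b^\top G^{-1}b}$ with $\norm{b}^2/\sqrt{b^\top Gb}$ and the Cauchy--Schwarz comparison — is short. The real work, and the main obstacle, is the non-asymptotic spectral control of the small Wishart matrix $AA^\top$ with \emph{explicit} constants: tracking how the scalar $\chi^2$ tail bounds survive the union bound over $O(k^2)$ Gram entries and the subsequent $\lambda_{\min}^{-1/2}-\lambda_{\max}^{-1/2}$ nonlinearity so as to land on the precise $8\sqrt{2k/d\,\log^3((2k+2)/p)}\,(1+\kappa/\sqrt d)$ form. A secondary subtlety is that $b$ and $G$ are not independent — both are built from $A$ — but the reduction $\bm g=e_1$ isolates $b$ as a single column of $A$ while the spectral bound on $G$ can be proved unconditionally, so the two high-probability events may simply be intersected.
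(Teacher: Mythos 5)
Your skeleton matches the paper's: both proofs reduce the gap to (i) how far the $k\times k$ Gram matrix $G=AA^\top$ is from $I$ and (ii) the size of $b=A\bm g$, and both finish by intersecting two high-probability events. The paper, however, works with the \emph{unnormalized} estimators $\hat{x}_{NES}=A^\top b$ and $\hat{x}_{LSQ}=A^\top G^{-1}b$, so the difference is exactly $\bm g^\top A^\top\left[G^{-1}-I\right]A\bm g\le\norm{G^{-1}-I}\,\norm{A\bm g}^2$, with $\norm{G^{-1}-I}\le 2\norm{G-I}$ via a Neumann series; this is where the $\norm{\bm g}^2$ on the right-hand side comes from. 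You instead normalize both estimators to unit vectors and compare the Rayleigh-quotient expressions $\sqrt{b^\top G^{-1}b}$ and $\norm{b}^2/\sqrt{b^\top Gb}$. That is a legitimate (and in some ways more natural, given Definition~\ref{def:gradient_estimation}) reading, and your Cauchy--Schwarz ordering and second-order-in-$E$ observation are correct, but note it is a different statement: for unit estimators the left-hand side is homogeneous of degree one in $\norm{\bm g}$, so the theorem's $\norm{\bm g}^2$ scaling only makes sense for the unnormalized version the paper actually bounds. Setting $\norm{\bm g}=1$ hides this, but you should be aware your route does not literally reproduce the displayed inequality.

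The one genuine gap is in how you propose to get the explicit constants. For the spectral control of $G$ you fall back on a union bound over the $O(k^2)$ Gram entries; passing from entrywise deviations of size $O(\sqrt{\log(k/p)/d})$ to $\norm{G-I}_{op}$ (via Frobenius or row sums) costs a factor of $k$, giving $\norm{G-I}=O\bigl(k\sqrt{\log(k/p)/d}\bigr)$ rather than the $O\bigl(\sqrt{k\log(k/p)/d}\bigr)$ needed for the stated $\sqrt{k/d}$ rate and the explicit $8\sqrt{2k/d\cdot\log^3(\cdot)}$ form. The paper avoids this by invoking the covariance-estimation bound of Gittens and Tropp, $\Pr(\lambda_{\max}(AA^\top-I)\ge\varepsilon)\le 2k\exp(-d\varepsilon^2/32k)$, which yields $\varepsilon=\sqrt{32k\log(2(k+1)/p)/d}$ directly; your Davidson--Szarek citation would also suffice for the $O(\cdot)$ form. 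The $\kappa$ and $(1+\kappa/\sqrt d)$ factors do not come from the spectral step at all, but from the bound on $\norm{A\bm g}^2$ (concentration of $\norm{\delta_i}^2$ and of the angles between $\delta_i$ and $\bm g$), which your $\chi^2$ treatment of $\norm{b}^2$ handles equivalently.
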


\begin{proof}
    Let us first recall our estimation setup. We have $k$ query vectors
    $\delta_i \in \mathbb{R}^d$ drawn from an i.i.d
    Gaussian distribution whose expected squared norm is one,
    i.e. $\delta_i \sim \mathcal{N}(0, \frac{1}{d}{I})$, for each $1\leq i\leq k$. Let
    the vector $\bm{y} \in \mathbb{R}^k$ denote the
    inner products of $\delta_i$s with the gradient, i.e.
    \begin{equation*}
	{y}_i := \inner{\delta_i}{\bm{g}},
    \end{equation*}
    for each $1\leq i\leq k$. We define the matrix $A$ to be a $k\times d$ matrix with
    the $\delta_i$s being its rows. That is, we have 
    \[
    A\bm{g} = \bm{y}.
    \]
    Now, recall that the closed forms of the two estimators we are interested in are given by
    \begin{align*}
	\hat{x}_{NES} &= A^T\bm{y} = A^TA\bm{g} \\
	\hat{x}_{LSQ} &=A^T(AA^T)^{-1}\bm{y} = A^T(AA^T)^{-1}A\bm{g},
	\end{align*}
	which implies that
	\begin{align*}
	\inner{\hat{x}_{NES}}{\bm{g}} &=
	\bm{g}^TA^TA\bm{g} \\
	\inner{\hat{x}_{LSQ}}{\bm{g}} &=
	\bm{g}^TA^T(AA^T)^{-1}A\bm{g}.
    \end{align*}

   \noindent We can bound the difference between these two inner products as
    \begin{align}\nonumber
	\inner{\hat{x}_{LSQ}}{\bm{g}} -
	\inner{\hat{x}_{NES}}{\bm{g}} &= 
	\bm{g}^TA^T\left[(AA^T)^{-1} - I\right]A\bm{g} \\ \nonumber
	&\leq 
	\norm{\bm{g}^TA^T}\norm{(AA^T)^{-1} - I} \norm{A\bm{g}} \\\label{eq:bound_diff}
	&\leq \norm{(AA^T)^{-1} - I} \norm{A\bm{g}}^2.
    \end{align}
    
    \noindent Now, to bound the first term in \eqref{eq:bound_diff}, observe that
     \begin{align*}
	 (AA^T)^{-1} = \left(I-(I-AA^T)\right)^{-1}  &= \sum_{l=0}^\infty (I-AA^T)^l
	 \end{align*}
	 and thus
	      \begin{align*}
	 	 I-(AA^T)^{-1} &=\sum_{l=1}^\infty (AA^T-I)^l.
	 	 \end{align*}
	 	 (Note that the first term in the above sum has been canceled out.)
	 This gives us that
	 \begin{align*}
	 \norm{I - (AA^T)^{-1}} &\leq
	 \sum_{l=1}^\infty \norm{AA^T-I}^l \\
	 &\leq \frac{\norm{AA^T-I}}{1-\norm{AA^T-I}} \\
	 &\leq 2\norm{AA^T-I},
     \end{align*}
     as long as $\norm{AA^T-I} \leq
          \frac{1}{2}$ (which, as we will see,  is indeed the case with high probability).

Our goal thus becomes bounding $\norm{AA^T-I}=\lambda_{max}(AA^T-I)$, where $\lambda_{max}(\cdot)$ denotes the largest (in absolute value) eigenvalue. Observe that $AA^T$ and $-I$ commute and are simultaneously  diagonalizable. As a result, for any $1\leq i\leq k$, we have that the $i$-th largest eigenvalue $\lambda_i(AA^T-I)$ of $AA^T-I$ can be written as 
\[
\lambda_i{(AA^T-I)} = \lambda_i{(AA^T)} + \lambda_i{(-I)}_i = \lambda_i{(AA^T)} - 1.
\]
So, we need to bound 
\[
    \lambda_{max}(AA^T-I) = \max\left\{\lambda_1(AA^T)-1,
    1-\lambda_{k}(AA^T)\right\}
\]

To this end, recall that $\mathbb{E}[AA^T] = I$ (since the rows of $A$ are sampled from the distribution $\mathcal{N}(0, \frac{1}{d}{I})$), and thus, by the covariance estimation theorem of Gittens and Tropp \cite{tropp} (see Corollary 7.2) (and union bounding over the two relevant events), we have that
    \begin{align*}
	{\Pr}(\lambda_{max}(AA^T-I)\geq \varepsilon) & =
	{\Pr}(\lambda_{1}(AA^T)\geq 1+\varepsilon \text { or }
	\lambda_{k}(AA^T)\geq 1-\varepsilon)\\
	& =  {\Pr}(\lambda_{1}(AA^T)\geq \lambda_1(I)+\varepsilon
	\text{ or } \lambda_{k}(AA^T)\geq \lambda_k(I)-\varepsilon) \leq
	2k\cdot\exp\left(-\frac{d\varepsilon^2}{32k}\right).
    \end{align*}
    Setting 
    $$\varepsilon = \sqrt{\frac{32k\log(2(k+1)/p)}{d}},$$ 
    ensuring that $\varepsilon \leq \frac{1}{2}$, gives us 
    \begin{align*}
	\Pr\left(\lambda_{max}(AA^T) - 1 \geq
	\sqrt{\frac{32k\log(2(k+1)/p)}{d}}\right) \leq \frac{k}{k+1}p.
    \end{align*}
    and thus
    \begin{equation}	
	\label{eq:comp}
	\norm{(AA^T)^{-1}-I} \leq 
	\sqrt{\frac{32k\log(2(k+1)/p)}{d}},
    \end{equation}
    with probability at least $1 - \frac{k}{k+1}p$.
    
    To bound the second term in \eqref{eq:bound_diff}, we note that all the
    vectors $\delta_i$ are chosen independently of the vector $\bm{g}$ and each
    other. So, if we consider the set
    $\{\hat{{g}},\hat{\delta_1},\ldots,\hat{\delta_k}\}$ of $k+1$
    corresponding {\em normalized} directions, we have (see, e.g.,
    \citep{gorban}) that the probability that any two of them have the
    (absolute value of) their inner product be larger than some
    $\varepsilon'= \sqrt{\frac{2\log(2(k+1)/p)}{d}}$ is at most
    \[
    \exp\left\{-(k+1)^2e^{-d(\varepsilon')^2 /
    2}\right\}=\exp\left\{-2\frac{k+1}{p}\right\} \leq \frac{p}{2(k+1)}.
    \]

    \noindent On the other hand, we note that each $\delta_i$ is a random
    vector sampled from the distribution $\mathcal{N}(0, \frac{1}{d}\bm{I}_d)$, so we have that (see, e.g., Lemma 1 in ~\citep{massart}), for any $1\leq i \leq k$ and any $\varepsilon''>0$,
    \begin{equation*}
	\Pr\left(||\delta_i||^2 \geq 1 + \varepsilon''\right) \leq \exp\left\{
	-\frac{(\varepsilon'')^2 d}{4} \right\}.
    \end{equation*}
    Setting 
    $$\varepsilon'' = 2\sqrt{\frac{\log(2k(k+1)/p)}{d}}$$ yields
    \begin{equation*}
	P\left(||\delta_i||^2 \geq 1+2\sqrt{\frac{\log(2(k+1)k/p)}{d}}\right) \leq
	\frac{p}{2k(k+1)}.
    \end{equation*}
    Applying these two bounds (and, again, union bounding over all the
    relevant events), we get that
    \begin{align*}
	\norm{A\bm{g}}^2 &= \sum_{i=1}^k (A\bm{g})_i^2  \\
			 &\leq d\cdot\left(\frac{2\log \left(\frac{2(k+1)}{p}
		\right)}{d}\right)\left(1+2\sqrt{\frac{\log\left(
	\frac{2k(k+1)}{p}\right)}{d}}\right)\norm{g}^2 \\
	&\leq 2\log \left(\frac{2(k+1)}{p}
		\right)\left(1+2\sqrt{\frac{2\log\left(
	\frac{2(k+1)}{p}\right)}{d}}\right)\norm{g}^2  \\
    \end{align*}
    with probability at most $\frac{p}{k+1}$.  \\

    Finally, by plugging the above bound and the bound \eqref{eq:comp} into the bound \eqref{eq:bound_diff}, we obtain that 
    \begin{align*}
	\inner{\hat{x}_{LSQ}}{\bm{g}} -
	\inner{\hat{x}_{NES}}{\bm{g}} &\leq
	\left(\sqrt{\frac{32k\log(2(k+1)/p)}{d}}\right)
	\cdot 2\log \left(\frac{2(k+1)}{p}
		\right)\left(1+2\sqrt{\frac{2\log\left(
	\frac{2(k+1)}{p}\right)}{d}}\right)\norm{g}^2  \\
	&\leq 
	8\sqrt{\frac{2k}{d}\cdot\log^3\left(\frac{2k+2}{p}\right)}\left(1 +
	\frac{\kappa}{\sqrt{d}}\right) ||g||^2,
    \end{align*}
with probability $1-p$, where
\[
    \kappa = 2\sqrt{\log\left(\frac{2k(k+1)}{p}\right)}.
\]
This completes the proof.
\end{proof}

\clearpage
\begin{theorem}[Least-Squares Optimality]
For a fixed projection matrix $A$ and under the following observation model of isotropic Gaussian
noise: $\bm{y} = A\bm{g} + \vec{\varepsilon}\text{ where }\bm{\varepsilon} \sim
\mathcal{N}(\bm{0},\varepsilon \bm{Id})$, 
the least-squares estimator as in Theorem~\ref{thm:neslsq},
$\hat{x}_{LSQ} = A^T(AA^T)^{-1}\bm{y}$
is a finite-sample efficient (minimum-variance unbiased) estimator of the parameter $\bm{g}$.
\end{theorem}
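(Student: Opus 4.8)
The plan is to establish optimality via the Cram\'er--Rao lower bound and then upgrade ``efficient'' to ``minimum-variance unbiased'' by invoking completeness of the underlying Gaussian exponential family (Lehmann--Scheff\'e). First I would write the log-likelihood of the observation model,
\[
\log p(\bm{y};\bm{g}) \;=\; -\frac{1}{2\varepsilon}\norm{\bm{y}-A\bm{g}}^2 + \text{const},
\]
and read off the score $\nabla_{\bm{g}}\log p(\bm{y};\bm{g}) = \tfrac{1}{\varepsilon}A^T(\bm{y}-A\bm{g})$ and the Fisher information $\mathcal{I}(\bm{g}) = -\mathbb{E}\bigl[\nabla^2_{\bm{g}}\log p\bigr] = \tfrac{1}{\varepsilon}A^TA$. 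Because this is a nonsingular exponential family (when $A$ has full column rank, e.g.\ the square case $k=d$), the regularity hypotheses of the Cram\'er--Rao inequality hold automatically, so every unbiased estimator $\hat{\bm{g}}$ of $\bm{g}$ satisfies $\mathrm{Cov}(\hat{\bm{g}}) \succeq \mathcal{I}(\bm{g})^{-1} = \varepsilon (A^TA)^{-1}$.

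Next I would check that $\hat{x}_{LSQ}$ attains this bound. Unbiasedness is immediate from $\mathbb{E}[\hat{x}_{LSQ}] = A^T(AA^T)^{-1}A\,\bm{g}$, which equals $\bm{g}$ when $A$ is square and invertible. I would then verify the equality case of the Cram\'er--Rao bound directly, namely that the score is an exact affine function of the estimator,
\[
\nabla_{\bm{g}}\log p(\bm{y};\bm{g}) \;=\; \tfrac{1}{\varepsilon}A^T(\bm{y}-A\bm{g}) \;=\; \mathcal{I}(\bm{g})\bigl(\hat{x}_{LSQ}-\bm{g}\bigr),
\]
which is precisely the condition under which an unbiased estimator is efficient; equivalently, one can directly compute $\mathrm{Cov}(\hat{x}_{LSQ}) = \varepsilon\,A^T(AA^T)^{-1}(AA^T)^{-1}A$, which reduces to $\varepsilon(A^TA)^{-1}=\mathcal{I}(\bm{g})^{-1}$ in the square case. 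Finally, since $A^T\bm{y}$ is a complete sufficient statistic for $\bm{g}$ in this model and $\hat{x}_{LSQ}$ is an unbiased function of it, Lehmann--Scheff\'e promotes efficiency to uniqueness: $\hat{x}_{LSQ}$ is \emph{the} MVU estimator.

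The step I expect to be the main obstacle is the genuinely underdetermined regime $k<d$, which is in fact the regime of interest for black-box attacks: there $\mathcal{I}(\bm{g}) = \tfrac{1}{\varepsilon}A^TA$ is singular, the classical Cram\'er--Rao bound does not apply verbatim, and $\bm{g}$ itself is not identifiable (anything in $\ker A$ is invisible to $\bm{y}$). The honest resolution is to state the claim for the estimable functional $P_A\bm{g}$, where $P_A = A^T(AA^T)^{-1}A$ is the orthogonal projector onto the row space of $A$ -- equivalently, to restrict attention to estimators whose output lies in that row space, a constraint $\hat{x}_{LSQ}$ satisfies by construction -- and then rerun the argument above with the Moore--Penrose pseudoinverse of $\mathcal{I}(\bm{g})$ in place of $\mathcal{I}(\bm{g})^{-1}$. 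This is complementary to the companion statement (proof in~\cite{biaslinear}) that the minimum-norm least-squares solution is the minimum-variance estimator among those with zero empirical loss, so the two together cover both the $k=d$ and the $k\ll d$ regimes.
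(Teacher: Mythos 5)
Your proposal follows essentially the same route as the paper: write the Gaussian log-likelihood, compute the score $\tfrac{1}{\varepsilon}A^T(\bm{y}-A\bm{g})$ and Fisher matrix $\tfrac{1}{\varepsilon}A^TA$, and verify the exact affine relation $\nabla_{\bm{g}}\log p = \mathcal{I}(\bm{g})(\hat{x}_{LSQ}-\bm{g})$ that characterizes attainment of the Cram\'er--Rao bound. The one substantive difference is your final paragraph, and it is a genuine improvement in care: the paper's argument tacitly multiplies by $\mathcal{I}(\theta)^{-1}$ on both sides, which requires $A^TA$ to be nonsingular, and that fails exactly in the underdetermined regime $k\ll d$ that the paper is actually interested in; there $\hat{x}_{LSQ}$ is unbiased only for the estimable functional $P_A\bm{g}$, not for $\bm{g}$ itself. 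Your fix (restrict to the row space of $A$ and use the pseudoinverse of the Fisher matrix) is the right way to make the statement honest, whereas the paper instead quietly defers the $k\ll d$ case to the separate companion theorem cited to the linear-models literature; the Lehmann--Scheff\'e addendum is correct but not needed for the claim as stated.
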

\begin{proof}
    Proving the theorem requires an application of the Cramer-Rao Lower Bound theorem:
    \begin{theorem}[Cramer-Rao Lower Bound]
	Given a parameter $\theta$, an observation distribution $p(x;\theta)$, and an unbiased
	estimator $\hat{\theta}$ that uses only samples from $p(x;\theta)$, then (subject to Fisher
	regularity conditions trivially satisfied by Gaussian distributions),
	$$\text{Cov}\left[\hat{\theta} - \theta\right] = \mathbb{E}\left[(\hat{\theta} -
	\theta)(\hat{\theta} - \theta)^T\right]
	\geq \left[I(\theta)\right]^{-1}
	\text{ where $I(\theta)$ is the Fisher matrix: }
	\left[I(\theta)\right]_{ij} = -\mathbb{E}\left[\frac{\partial \log p(x;\theta)}{\partial\theta_i\partial \theta_j}\right]$$
    \end{theorem}
    Now, note that the Cramer-Rao bound implies that if the variance of the estimator $\hat{\theta}$ is the
    inverse of the Fisher matrix, $\hat{\theta}$ must be the minimum-variance unbiased estimator.
    Recall the following form of the Fisher matrix:
    \begin{align}
	I(\theta) = \mathbb{E}\left[\left(\frac{\partial \log p(x;\theta)}{\partial
	\theta}\right)\left(\frac{\partial \log p(x;\theta)}{\partial \theta}\right)^T
    \right]\label{eq:fisher}
    \end{align}

    Now, suppose we had the following equality, which we can then simplify using the preceding
    equation:
    \begin{align}
	I(\theta)\left(\hat{\theta}-\theta\right) &= \frac{\partial \log p(x;\theta)}{\partial \theta}
	\label{eq:condition} \\
	\left(I(\theta)\left(\hat{\theta}-\theta\right)\right)
	\left(I(\theta)\left(\hat{\theta}-\theta\right)\right)^T
	&= \left(\frac{\partial \log p(x;\theta)}{\partial \theta}\right)\left(\frac{\partial
	\log p(x;\theta)}{\partial \theta}\right)^T \\
	\mathbb{E}\left[\left(I(\theta)\left(\hat{\theta}-\theta\right)\right)
	\left(I(\theta)\left(\hat{\theta}-\theta\right)\right)^T\right]
	&= \mathbb{E}\left[\left(\frac{\partial \log p(x;\theta)}{\partial \theta}\right)\left(\frac{\partial
	\log p(x;\theta)}{\partial \theta}\right)^T\right] \\
	I(\theta)\mathbb{E}\left[(\hat{\theta}-\theta)(\hat{\theta}-\theta)^T\right]I(\theta) &=
	I(\theta)
    \end{align}
    Multiplying the preceding by $\left[I(\theta)\right]^{-1}$ on both the left and right sides
    yields:
    \begin{align}
	\mathbb{E}\left[(\hat{\theta}-\theta)(\hat{\theta}-\theta)^T\right] =
	\left[I(\theta)\right]^{-1},
    \end{align}
    which tells us that~\eqref{eq:condition} is a sufficient condition for finite-sample efficiency
    (minimal variance). We show that this condition is satisfied in our case, where we have
    $y \sim A\bm{g} + \varepsilon$, $\hat{\theta} = \hat{x}_{LSQ}$, and $\theta = \bm{g}$. We begin
    by computing the Fisher matrix directly, starting from the distribution of the samples $y$:
    \begin{align}
	p(y;\bm{g}) &= \frac{1}{\sqrt{(2\pi\varepsilon)^d}}
	    \exp\left\{\frac{1}{2\varepsilon}(y-A\bm{g})^T(y-A\bm{g})\right\}  \\
	\log p(y;\bm{g}) &= \frac{d}{2}\log\left(2\pi\varepsilon\right) +
	    \frac{1}{2\varepsilon}(y-A\bm{g})^T(y-A\bm{g}) \\
	\frac{\partial \log p(y;\bm{g})}{\partial \bm{g}} &= \frac{1}{2\varepsilon}
	    \left(2A^T(y-A\bm{g})\right) \\
	&= \frac{1}{\varepsilon}A^T(y-A\bm{g}) \\
    \end{align}
    Using~\eqref{eq:fisher},
    \begin{align}
	    I(\bm{g}) &= \mathbb{E}\left[\left(\frac{1}{\varepsilon}A^T(y-A\bm{g})\right)
	\left(\frac{1}{\varepsilon}A^T(y-A\bm{g})\right)^T\right] \\
	&= \frac{1}{\varepsilon^2}A^T\mathbb{E}\left[(y-A\bm{g})(y-A\bm{g})^T\right]A \\
	&= \frac{1}{\varepsilon^2}A^T(\varepsilon \bm{Id})A \\
	&= \frac{1}{\varepsilon}A^TA
    \end{align}

    Finally, note that we can write:
    \begin{align}
	I(\bm{g})(\hat{x}_{LSQ} - \bm{g}) &= \frac{1}{\varepsilon}A^TA(A^T(AA^T)^{-1}y - \bm{g}) \\
	&= \frac{1}{\varepsilon}(A^Ty - A^TA\bm{g}) \\
	&= \frac{\partial \log p(y;\bm{g})}{\partial \bm{g}},
    \end{align}
    which concludes the proof, as we have shown that $\hat{x}_{LSQ}$ satisfies the
    condition~\eqref{eq:condition}, which in turn implies finite-sample efficiency.
\end{proof}

\begin{claim}
    Applying the precise bound that we can derive from Theorem~\ref{thm:neslsq} on an
    ImageNet-sized dataset $(d = 300000)$ and using $k = 100$ queries (what
    we use in our $\ell_\infty$ threat model and ten
    times that used for our $\ell_2$ threat model),
    \[
	\inner{\hat{x}_{LSQ}}{\bm{g}} -
	\inner{\hat{x}_{NES}}{\bm{g}} \leq \frac{5}{4}||g||^2.
    \]
    For 10 queries, 
    \[
	\inner{\hat{x}_{LSQ}}{\bm{g}} -
	\inner{\hat{x}_{NES}}{\bm{g}} \leq \frac{1}{2}||g||^2.
    \]
\end{claim}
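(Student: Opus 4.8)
The plan is to obtain the claim as a direct numerical instantiation of the precise form of Theorem~\ref{thm:neslsq}; no new probabilistic machinery is required. That theorem already establishes that, with probability at least $1-p$,
\[
\inner{\hat{x}_{LSQ}}{\bm{g}} - \inner{\hat{x}_{NES}}{\bm{g}} \leq 8\sqrt{\frac{2k}{d}\cdot\log^{3}\left(\frac{2k+2}{p}\right)}\left(1 + \frac{\kappa}{\sqrt{d}}\right)\norm{g}^{2},\qquad \kappa \leq 2\sqrt{\log\left(\frac{2k(k+1)}{p}\right)}.
\]
So the whole task reduces to substituting $d = 300000$ together with $k = 100$ (and then $k = 10$), fixing a sufficiently small confidence level $p$, and simplifying the resulting expression.

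Concretely, I would first isolate the leading factor $8\sqrt{2k/d}$, which equals $8/\sqrt{1500}$ for $k=100$ and $8/\sqrt{15000}$ for $k=10$. Next I would observe that the correction term is benign: since $\sqrt{d}\approx 548$ dwarfs $\kappa$ (which is only a small multiple of $\sqrt{\log((2k(k+1))/p)}$), the factor $1+\kappa/\sqrt{d}$ is $1+o(1)$ and can be folded into the rounding at the end. The remaining work is to bound the polylogarithmic factor $\log^{3/2}((2k+2)/p)$ for the chosen $p$, multiply it against the leading factor, and check that the product falls below $5/4$ when $k=100$ and below $1/2$ when $k=10$.

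The proof contains no genuine mathematical difficulty — all the analytical content lives in Theorem~\ref{thm:neslsq}, and what remains is arithmetic. The one step that actually needs care, and which I would regard as the ``hard part,'' is pinning down the confidence parameter $p$: because the packaged bound carries a $\log^{3/2}((2k+2)/p)$ factor, $p$ must be chosen so that this term does not overwhelm the small $\sqrt{k/d}$ prefactor, and one has to verify numerically that the stated round constants $5/4$ and $1/2$ are valid upper bounds for that choice (this is what implicitly pins down the ``with probability $1-p$'' qualifier behind the claim). A secondary check is that the chosen parameter values satisfy the side conditions invoked inside the proof of Theorem~\ref{thm:neslsq} — in particular that $\varepsilon = \sqrt{32k\log(2(k+1)/p)/d}\le \tfrac12$, so that the Neumann-series estimate for $\norm{(AA^{T})^{-1}-I}$ is valid — which holds comfortably for $d=300000$ and $k\le 100$. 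If the packaged constant turns out to be slightly too loose for the target thresholds, the fallback is to substitute instead into the two intermediate bounds from the proof of Theorem~\ref{thm:neslsq} (the concentration bound on $\norm{(AA^{T})^{-1}-I}$ and the bound on $\norm{A\bm g}^{2}$) before combining them, recovering a few constant factors.
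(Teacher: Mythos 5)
The paper itself gives no proof of this claim at all --- it is stated bare, immediately after the proof of Theorem~\ref{thm:neslsq}, as an implicit ``plug in $d$ and $k$'' exercise. So your plan is exactly the intended route, and there is no alternative decomposition to compare against. The problem is that your proposal defers the one step that constitutes the entire content of the claim --- the arithmetic --- and that step does not go through as smoothly as you suggest. Take $k=100$, $d=300000$. The leading factor is $8\sqrt{2k/d} = 8/\sqrt{1500} \approx 0.2066$, so to land at $\tfrac{5}{4}\|g\|^2$ you need $\log^{3/2}\bigl(\tfrac{202}{p}\bigr) \leq 1.25/0.2066 \approx 6.05$, i.e.\ $\log(202/p) \leq 3.33$. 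With the natural logarithm (which is what the concentration bounds in the proof of Theorem~\ref{thm:neslsq} actually produce), this forces $202/p \leq e^{3.33} \approx 28$, i.e.\ $p \geq 7$, which no valid confidence level satisfies. So the packaged constant-$8$ bound, read literally, cannot yield $5/4$ for $k=100$ for \emph{any} $p \in (0,1]$; the ``hard part'' you flag is not merely delicate, it is where the argument as sketched fails. (The two stated constants are also not mutually consistent under a single $p$: the bound scales essentially as $\sqrt{k}$ times a mild log correction, so going from $k=10$ to $k=100$ should multiply it by more than $3$, whereas $5/4 \div 1/2 = 2.5$.)

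Your fallback --- substituting into the intermediate bounds on $\|(AA^T)^{-1}-I\|$ and $\|A\bm{g}\|^2$ before combining, to recover constants --- is therefore not optional but necessary, and a complete proof would have to actually carry that computation out and exhibit the $p$ for which the stated constants hold (for what it is worth, the numbers $5/4$ and $1/2$ are reproduced almost exactly by the displayed formula if one takes $p=0.1$ and reads $\log$ as base-$10$, which suggests the claim was computed under a convention inconsistent with the theorem's proof). Your side-condition check that $\varepsilon = \sqrt{32k\log(2(k+1)/p)/d} \leq \tfrac12$ is correct and worth keeping. But as written, the proposal establishes the shape of the argument without establishing the claim.
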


\clearpage
\section{Omitted Figures}
\label{app:omitted}

\subsection{Compressive Sensing}
Compressed sensing approaches can, in some cases, solve the optimization problem presented in Section~\ref{sec:gradient_estimation_problem}. However, these approaches require sparsity to improve over the least squares method. Here we show the lack of sparsity in gradients through a classifier on a set of canonical bases for images. In Figure~\ref{fig:sparsities}, we plot
the fraction of $\ell_2$ weight accounted for by the largest $k$ components in randomly chosen image gradients when using two canonical bases: standard and wavelet (db4). While lack of sparsity in these bases does not strictly preclude the existence of a
basis on which gradients are sparse, it suggests the lack of a fundamental
structural sparsity in gradients through a convolutional neural network. 

\begin{figure}[h!]
  \begin{center}
  
  \includegraphics{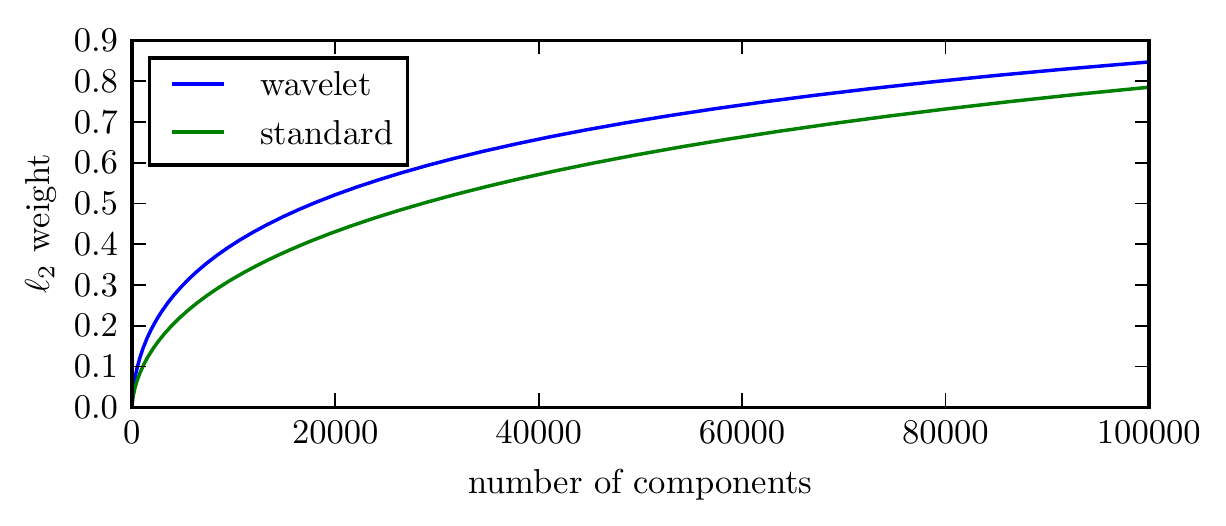}
  \end{center}
  \caption{Sparsity in standard, wavelet (db4 wavelets), and PCA-constructed bases for the gradients of 5,000 randomly chosen example images in the ImageNet validation set. The y-axis shows the mean fraction of $\ell_2$ weight held by the largest $k$ vectors over the set of 5,000 chosen images. The x-axis varies $k$. The gradients are taken through a standardly trained Inception v3 network. None of the bases explored induce significant sparsity.}
  \label{fig:sparsities}
\end{figure}

\clearpage
\subsection{Tiling}
An example of the tiling procedure applied to a gradient can be seen in Figure~\ref{fig:avg-blur}.
\begin{figure}[h!]
  \centering
  \begin{subfigure}{.45\textwidth}
    \centering
    {
      \setlength{\fboxsep}{0pt}
      \setlength{\fboxrule}{1pt}

      \fbox{\includegraphics[width=.85\linewidth]{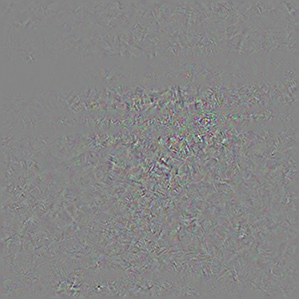}}
    }

    \caption{Gradient}
    \label{fig:avg-blur-sub1}
  \end{subfigure}
  \begin{subfigure}{.45\textwidth}
    \centering
    {
      \setlength{\fboxsep}{0pt}
      \setlength{\fboxrule}{1pt}
      \fbox{\includegraphics[width=.85\linewidth]{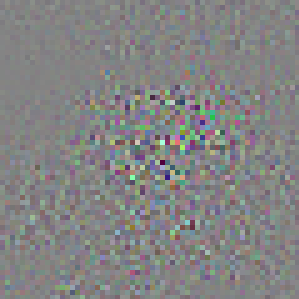}}
    }
    \caption{Tiled gradient}
    \label{fig:avg-blu-sub2}
\end{subfigure}
  \caption{Average blurred gradient with kernel size or ``tile length'' 5. The original gradient can be seen in~\ref{fig:avg-blur-sub1}, and the ``tiled'' or average blurred gradient can be seen in~\ref{fig:avg-blu-sub2}}
  \label{fig:avg-blur}
\end{figure}

\subsection{Time-dependent Priors at Higher Step Sizes}
We show in Figure~\ref{fig:x_step_priors_2} that the correlation between
successive gradients on the NES trajectory are signficantly correlated,
even at much higher step sizes (up to $\ell_2$ norm of 4.0, which is a
typical value for $\varepsilon$, the total adversarial perturbation bound
and thus an absolute bound on step size). This serves as further motivation
for the time-dependent prior.

\begin{figure}[h!]
    \begin{center}
	\includegraphics{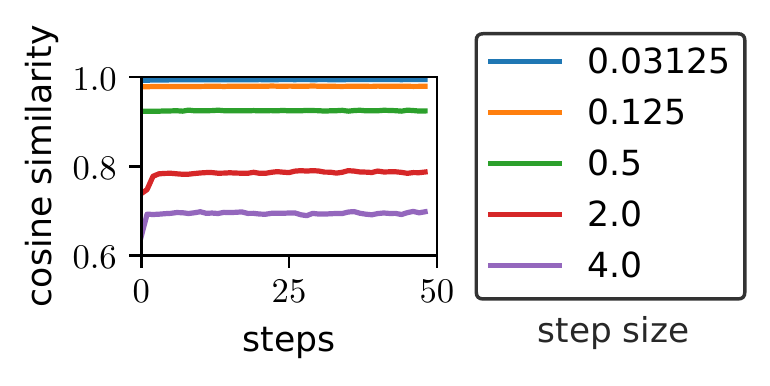}
	\caption{Figure~\ref{fig:successive_correlation} repeated for
	    several step sizes, showing that the successive correlation
	between gradients continues even at higher step sizes.}
	\label{fig:x_step_priors_2}
    \end{center}
\end{figure}

\clearpage
\section{Hyperparameters}
\label{app:hyperparameters}
\begin{table}[h]
\centering
\caption{Hyperparameters for the NES approach.}
\label{app-tab:nes-hyperparameters}
  \begin{tabular}{@{}cccc@{}}
    \toprule
    \multirow{2}{*}{\textbf{Hyperparameter}} & \phantom{x} & \multicolumn{2}{c}{\textbf{Value}} \\
    \cmidrule{3-4} && ImageNet $\ell_\infty$ & ImageNet $\ell_2$ \\
    \midrule
    Samples per step && 100 & 10 \\
    Learning Rate && 0.01 & 0.3 \\
    \bottomrule
  \end{tabular}
\end{table}

\begin{table}[h]
\centering
\caption{Hyperparameters for the bandits approach
(variables names as used in pseudocode).}
\label{app-tab:bandits-hyperparameters}
  \begin{tabular}{@{}cccc@{}}
    \toprule
    
    \multirow{2}{*}{\textbf{Hyperparameter}} & \phantom{x} & \multicolumn{2}{c}{\textbf{Value}} \\
    \cmidrule{3-4} && ImageNet $\ell_\infty$ & ImageNet $\ell_2$\\
    \midrule
    $\eta$ (OCO learning rate) && $100$ & $0.1$\\
    $h$ (Image $\ell_p$ learning rate) && $0.01$ & $0.5$ \\
    $\delta$ (Bandit exploration) && $1.0$ & $0.01$ \\
    $\eta$ (Finite difference probe) && $0.1$ & $0.01$ \\
    Tile size (Data-dependent prior only) && $(6px)^2$ & $(6px)^2$ \\
    \bottomrule
  \end{tabular}
\end{table}

\clearpage
\section{Full Results}
\label{app:results}

\begin{figure}[h]
  \include{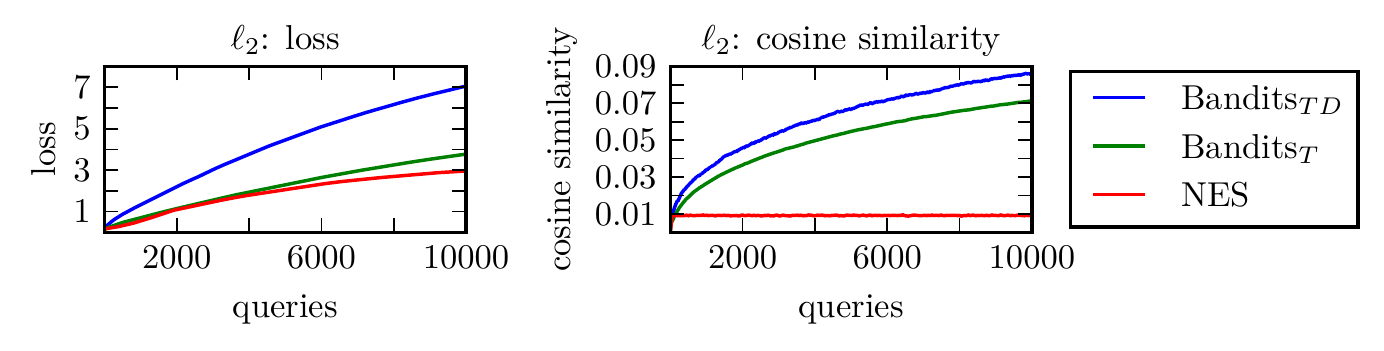}
  \vspace{-6em}
  \include{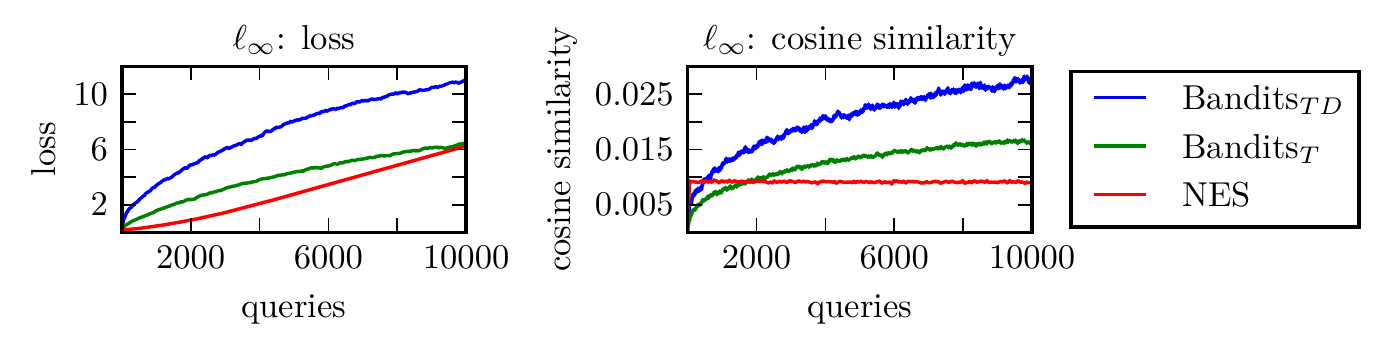}
  \caption{\textit{Average loss and cosine distance versus number of
      queries used over the approaches' optimization trajectories in the
      two threat models (averaged over 100 images).}}
  \label{fig:multistep-plots}
\end{figure}

\begin{figure}[h]
  \include{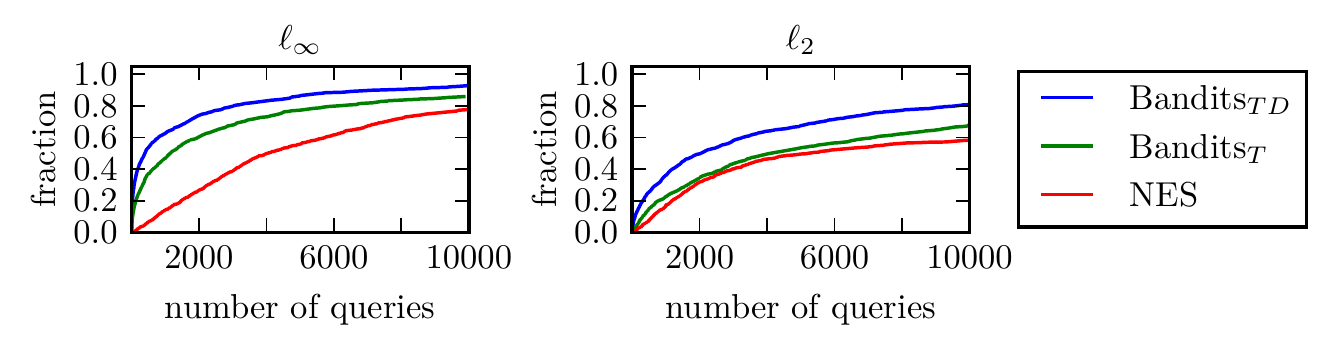}
  \caption{\textit{Cumulative distribution functions for the number of queries required
      to create an adversarial example in the $\ell_2$ and $\ell_\infty$
      settings for the NES, bandits with time prior (Bandits$_{T}$), and
      bandits with time and data-dependent priors (Bandits$_{TD}$)
      approaches. Note that the CDFs do not converge to one, as the
      approaches sometimes cannot find an adversarial example in less than
      10,000 queries.}}
  \label{fig:cdf}
\end{figure}

\begin{figure}[h]
  \begin{tikzpicture}[every plot/.append style={very thick}]

\begin{groupplot}[group style={group size=2 by 1, horizontal sep=2cm}]
\nextgroupplot[
title={Avg. queries by success \% ($\ell_\infty$)},
ylabel={number of queries},
x label style={at={(axis description cs:0.5,0.05)},anchor=north},
y label style={at={(axis description cs:0.05,0.5)},anchor=north},
ytick={0,500,1000,1500,2000},
yticklabels={0,,,,2k},
scaled y ticks=false,
axis line style=thick,
xlabel={success rate},
ymin=0, ymax=2000,
xmin=0, xmax=1.0,
axis on top,
xtick={0,0.5,1.0},
xticklabels={0,50,100},
tick pos=both,
width=0.35\textwidth,
height=3.2cm
]
\addplot [dotted, blue] table[x index=1, y index=0]{fig/data/avgq-vs-success/nes-linf-cdf.csv};
\addplot [dashed, green!50.0!black] table[x index=1, y index=0] {fig/data/avgq-vs-success/banditst-linf-cdf.csv};
\addplot [red] table[x index=1, y index=0] {fig/data/avgq-vs-success/banditstd-linf-cdf.csv};
\nextgroupplot[
title={Avg. queries by success \% ($\ell_2$)},
ylabel={avg. queries},
x label style={at={(axis description cs:0.5,0.05)},anchor=north},
y label style={at={(axis description cs:0.05,0.5)},anchor=north},
ytick={0,500,1000,1500,2000,2500,3000},
yticklabels={0,,,,,,3k},
axis line style=thick,
scaled y ticks=false,
xlabel={success rate},
ymin=0, ymax=3000,
xmin=0, xmax=1.0,
axis on top,
xtick={0,0.5,1.0},
xticklabels={0,50,100},
tick pos=both,
legend entries={{NES},{Bandits$_T$ (time prior)},{Bandits$_TD$ (time $+$ data)}},
legend cell align={left},
legend style={at={(1.2, 0.5)}, anchor=west}, 
width=0.35\textwidth,
height=3.2cm
]
\addlegendimage{no markers, blue, dotted}
\addlegendimage{no markers, green!50.0!black, dashed}
\addlegendimage{no markers, red}
\addplot [dotted, mark options={scale=0.1}, blue] table[x index=1, y index=0]{fig/data/avgq-vs-success/nes-l2-cdf.csv};
\addplot [dashed, green!50.0!black] table[x index=1, y index=0]{fig/data/avgq-vs-success/banditst-l2-cdf.csv};
\addplot [red] table[x index=1, y index=0] {fig/data/avgq-vs-success/banditstd-l2-cdf.csv};
\end{groupplot}

\end{tikzpicture}
\vspace*{-2.5em}
  \caption{\textit{The average number of queries used per successful image for each
      method when reaching a specified success rate: we compare NES~\cite{query-efficient},
  Bandits$_T$ (our method with time prior only), and Bandits$_{TD}$ (our
  method with both data and time priors)  and find that our methods strictly
  dominate NES---that is, for any desired sucess rate, our
  methods take strictly less queries per successful image than NES.}}
  \label{fig:query-cdf}
\end{figure}
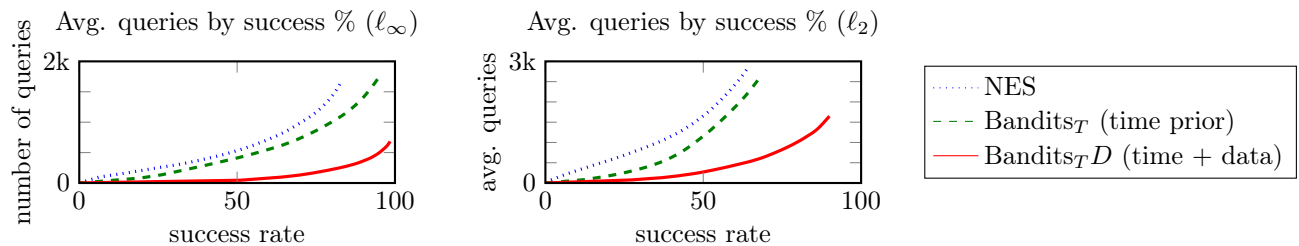

\clearpage
\section{Results for other Classifiers}
\label{app:classifiers}

Here, we give results for the ImageNet dataset, comparing our best method
(Bandits$_{TD}$) and NES for Inception-v3 (also shown in
Table~\ref{tab:multistep}), VGG16, and ResNet50 classifiers. Note that we
do not fine-tune the hyperparameters to the new classifiers, but simply use
the hyperparameters found for Inception-v3. Nevertheless, our best method
consistently outperforms NES on black-box attacks.

\begin{table}[h]
\centering
\caption{Summary of effectiveness of $\ell_\infty$ and $\ell_2$ ImageNet
attacks on Inception v3, ResNet-50, and VGG16 (I, R, V) using NES and
bandits with time and data-dependent priors (Bandits$_{TD}$). Note that
in the first column, the average number of queries  is calculated only over
successful attacks, and we enforce a query limit of 10,000 queries. For
purposes of direct comparison, the last column calculates the average
number of queries used for only the images that NES (previous SOTA) was
successful on. Our most powerful attack uses 2-4 times fewer queries, and
fails 2-5 times less often.
}
{\small
  \begin{tabular}{@{}ccccccccccccccc@{}}
    \toprule
    \multirow{2}{*}{} & \multirow{2}{*}{\textbf{Attack}} & 
    \multicolumn{3}{c}{\textbf{Avg. Queries}} & \phantom{x} &
    \multicolumn{3}{c}{\textbf{Failure Rate}} & \phantom{x} &
    \multicolumn{3}{c}{\textbf{\#Q on NES Success}} \\ 
    \cmidrule{3-5} \cmidrule{7-9} \cmidrule{11-13} && I & R & V && I & R & V && I & R & V\\
    \midrule
    \multirow{2}{*}{$\ell_2$} & NES &
     2938 & 2193 & 1244 &&
     34.4\% & 10.1\% & \textbf{11.6\%} &&
     2938 & 2193 & 1244 \\
    & \textbf{Bandits$_{\mathbf{TD}}$} &
    \textbf{1858} & \textbf{993} & \textbf{594} &
    &\textbf{15.5\%} &  \textbf{9.7\%} & 17.2\% &&
     \textbf{999} & \textbf{1195} & \textbf{1219} \\
    \midrule 
    \multirow{2}{*}{$\ell_\infty$} & NES &
      1735 & 1397  & 764  &&
      22.2\% & 10.4\% & 10.5\% &&
      1735 & 1397 & 764 \\
    & \textbf{Bandits$_{\mathbf{TD}}$} &
    \textbf{1117} & \textbf{722} & \textbf{370} &
    &\textbf{4.6\%} &  \textbf{3.4\%} & \textbf{8.4\%} &&
     \textbf{703} & \textbf{594} & \textbf{339} \\
    \bottomrule
  \end{tabular}
  }
\end{table}

\end{document}